\title{
    Exponential Weights Algorithms for Selective Learning\thanks{
    We would like to thank Jay Mardia for helpful discussions during this project, especially for pointing us to Bregman divergence and self-concordance that underlie Theorem~\ref{thm:mean-prediction}. This work was supported by NSF Awards CCF-1704417 and AF-1813049, DOE Award DE-SC0019205, and ONR Young Investigator Award N00014-18-1-2295.
}
}
\date{}
\author{Mingda Qiao}
\author{Gregory Valiant}
\affil{\texttt{\{mqiao,valiant\}@stanford.edu}}
\affil{Stanford University}
\newcommand{\D}{\mathcal{D}}
\newcommand{\eps}{\epsilon}
\newcommand{\Ex}[1]{\mathbb{E}\left[#1\right]}
\newcommand{\E}{\mathcal{E}}
\newcommand{\F}{\mathcal{F}}
\newcommand{\I}[1]{\mathbb{I}\left[#1\right]}
\newcommand{\Ical}{\mathcal{I}}
\newcommand{\KL}[2]{\mathrm{KL}(#1||#2)}
\renewcommand{\L}{\mathcal{L}}
\newcommand{\poly}{\operatorname*{poly}}
\newcommand{\polylog}{\operatorname*{polylog}}
\newcommand{\pr}[1]{\Pr\left[#1\right]}
\newcommand{\R}{\mathbb{R}}
\newcommand{\T}{\mathcal{T}}
\newcommand{\U}{\mathcal{U}}
\newcommand{\X}{\mathcal{X}}
\newcommand{\Y}{\mathcal{Y}}
\newcommand{\Z}{\mathcal{Z}}
\newtheorem{theorem}{Theorem}
\newtheorem{problem}{Problem}
\newtheorem{lemma}{Lemma}
\newtheorem{definition}[lemma]{Definition}
\newtheorem{example}[lemma]{Example}
\newtheorem{remark}[lemma]{Remark}
\newtheorem{proposition}[lemma]{Proposition}
\begin{document}

\maketitle

\begin{abstract}%
    We study the selective learning problem introduced by~\cite{qiao2019theory}, in which the learner observes $n$ labeled data points one at a time. At a time of its choosing, the learner selects a window length $w$ and a model $\hat\ell$ from the model class $\L$, and then labels the next $w$ data points using $\hat\ell$. The \emph{excess risk} incurred by the learner is defined as the difference between the average loss of $\hat\ell$ over those $w$ data points and the smallest possible average loss among all models in $\L$ over those $w$ data points.
    
    We give an improved algorithm, termed the \emph{hybrid exponential weights} algorithm, that achieves an expected excess risk of $O((\log\log|\L| + \log\log n)/\log n)$. This result gives a doubly exponential improvement in the dependence on $|\L|$ over the best known bound of $O(\sqrt{|\L|/\log n})$. We complement the positive result with an almost matching lower bound, which suggests the worst-case optimality of the algorithm. 

    We also study a more restrictive family of learning algorithms that are \emph{bounded-recall} in the sense that when a prediction window of length $w$ is chosen, the learner's decision only depends on the most recent $w$ data points. We analyze an exponential weights variant of the ERM algorithm in~\cite{qiao2019theory}. This new algorithm achieves an expected excess risk of $O(\sqrt{\log |\L|/\log n})$, which is shown to be nearly optimal among all bounded-recall learners. Our analysis builds on a generalized version of the selective mean prediction problem in~\cite{drucker2013high,qiao2019theory}, which may be of independent interest.
\end{abstract}

\section{Introduction}

We consider learning in an online setting: a sequence of labeled data points becomes available in an online fashion, and the learner is asked to choose a model from a given model class $\F$ (e.g., the set of linear models, or neural networks of a certain architecture) to label the unseen data points. If the data are identically and independently distributed, the problem is equivalent to the supervised learning setting. However, this i.i.d.\ assumption rarely holds in practice due to possible shifts in the data distribution or even malicious data corruption.

In online learning theory, this non-i.i.d.\ feature of the problem is typically modeled by allowing the data sequence to be arbitrary, and then evaluating the learning performance by comparing it to the best \emph{fixed} model in the model class $\F$ for the whole data sequence. In many of these formulations, the learner is required to predict a label for every single data point.

In this work, we study the online learning setting from a different angle, termed \emph{selective learning}, that allows the learner to choose an arbitrary window of the data sequence to make its prediction. The learner observes the data points one at a time, and at a time of its choosing, it selects a window length $w$ and a certain model $\hat f \in \F$, and then predicts that $\hat f$ is a good model for the next $w$ unseen data points. The learner is evaluated in terms of the \emph{excess risk}, i.e., the gap between the average loss of $\hat f$ on the $w$ data points and the smallest possible loss among all the models in $\F$ on those $w$ data points. Importantly, the benchmark to which we compare the learner's performance is the best model \emph{for the specific prediction window}, rather than the best one for the whole data sequence (as in the usual definition of regret in online learning).

This selective learning setting models many high-stakes decision making scenarios where one can freely decide when to deploy a certain model, but after that the switching cost is prohibitively high; once a decision is made, the learner has to commit to the chosen model throughout the prediction window. This is in the same spirit as the work on bandit problems with switching costs (see e.g.~\cite{agrawal1988asymptotically,jun2004survey,cesa2013online} and the references therein), where the agent is charged a cost whenever it changes its action. In addition, deploying the same model over the chosen prediction window can be more desirable when interpretability is important: if $\F$ is a family of decision trees, the predictions made by each single decision tree in $\F$ can be easily interpreted, whereas frequently switching between multiple trees is much less transparent.

Similarly to the usual online learning setting, no distributional assumptions are made on the data sequence---the data points and their labels can be adversarially chosen, as long as they are not adaptive to the learner's actual prediction. Although this setting might seem too general for any non-trivial learning guarantees to exist, perhaps surprisingly, the main results of \cite{drucker2013high,qiao2019theory} show that the prediction counterpart of this problem can be accurately done; we discuss their results in more detail in Section~\ref{sec:related}. 

The intuition behind the positive results in~\cite{drucker2013high,qiao2019theory} is the Ramsey theoretic observation that any sufficiently long sequence of bounded numbers must exhibit a certain level of ``structure'' at some timescale.  Specifically, there must be some timescale at which the moving averages do not vary too wildly. 
Our work can be viewed as analyzing the analogous question in higher dimension, where we ask the extent to which every high-dimensional sequence must exhibit some type of structure at some timescale, and the type of structure in question is mediated by the model family, $\F$.


\subsection{Problem Setting}
Let $\X$ and $\Y$ denote the instance space and the label space, and define $\Z = \X \times \Y$ as the space of labeled data. The model class $\F$ is a family of functions (also called models) that map $\X$ to $\Y$. A bounded loss function $\ell:\Y\times\Y\to[0,1]$ is specified, such that $\ell(\hat y, y)$ is the loss for predicting label $\hat y$ on an instance with true label $y$. Given the loss function $\ell$, it is convenient to view the model class $\F$ as a family $\L$ of functions mapping labeled data to losses: each model $f_i \in \F$ corresponds to a function $\ell_i: \Z \to [0, 1]$ defined as $\ell_i(x, y) = \ell(f_i(x), y)$. Thus, we will use $\F$ and $\L$ interchangeably to denote the model class in the following.

We formally define the selective learning problem over model class $\L$ as follows.
\begin{problem}[Selective Learning]\label{prob:learn}
    A labeled data sequence $z \in \Z^n$ of length $n$ is chosen at the beginning. The learner observes $z_1, z_2, \ldots, z_n$ one by one. At any time step $t \in \{0, 1, \ldots, n-1\}$, after seeing the first $t$ data points, the learner can specify a window length $w \in [n - t]$ and a model $\hat\ell \in \L$. The learning procedure then terminates, and the excess risk of the learner is defined as $\frac{1}{w}\sum_{i=1}^{w}\hat\ell(z_{t+i}) - \inf_{\ell\in\L}\frac{1}{w}\sum_{i=1}^{w}\ell(z_{t+i})$. The learner must make one such action (specifying a window length and a model) before all the $n$ data points are revealed.
\end{problem}

The above setting is analogous to the \emph{selective prediction} model studied by~\cite{drucker2013high,qiao2019theory}, where a predictor predicts certain pre-specified statistics of the unseen data. This selective learning setting was also informally introduced in~\cite{qiao2019theory}, termed the problem of fitting future data.

\begin{remark}[Comparison to online learning]
    While the definition of Problem~\ref{prob:learn} closely resembles that of the ``experts problem'' in online learning (see e.g.~\cite{littlestone1989weighted,cesa1997use,cesa2006prediction}), we highlight two important differences: (1) In online learning, the predictor is required to predict at every single time step. In contrast, the learner in our setting is selective in the sense that it makes only one prediction, which could span any window of its choice; (2) In the experts problem, the performance of the learner is compared to the best expert for the whole sequence in hindsight, while in our setting, the ``benchmark'' is the best model for the specific window.
    
    More recently, the notion of ``adaptive regret'' has been studied in the online learning literature~\cite{hazan2009efficient,adamskiy2016closer}. The adaptive regret of the learner is defined as the maximum regret that it incurs among all contiguous sub-intervals of the time horizon. The selective learning setting is similar to the adaptive regret formulation in that both evaluate the learner based on the best expert/action for each specific interval; however, the selective learning setting only considers the interval chosen by the learner, while the work on adaptive regret typically requires the learner to act at every step. Thus, the results in the two settings are qualitatively different and incomparable.
\end{remark}

We consider the following two families of selective learning algorithms, \emph{non-adaptive} learners and \emph{bounded-recall} learners, which choose the prediction windows and the models in a more restrictive way and thus allow simpler analyses.

\begin{definition}[Non-Adaptive and Bounded-Recall Learners]\label{def:learner}
    A selective learning algorithm is non-adaptive if it decides the prediction window before seeing the data sequence. Furthermore, an algorithm is bounded-recall if it is non-adaptive and, whenever it chooses a prediction window $t + 1, t + 2, \ldots, t + w$, it ignores all but the $w$ data points $z_{t-w+1}, z_{t-w+2}, \ldots, z_t$ immediately before the window.
\end{definition}

The new algorithms in this paper are all non-adaptive; as we show, the performance of one of these non-adaptive learners is comparable to that of the best adaptive one for a wide range of parameters. The bounded-recall property, on the other hand, turns out to be a much more stringent constraint on the learner that leads to an exponential increase in the optimal dependence on $|\L|$. We also note that all the selective prediction algorithms studied by~\cite{drucker2013high,qiao2019theory} are both non-adaptive and bounded-recall.

The bounded-recall algorithms are similar to the ``bounded-recall strategies'' in repeated games that choose the action in the current round based on a bounded number of the most recent rounds~\cite{lehrer1988repeated}. In the context of selective learning, these bounded-recall learners are memory-efficient in the sense they only need to store as many data points as the length of the prediction window.

The following prediction problem generalizes the selective prediction setting to higher dimensions and more general loss functions, and will be useful for our analysis of selective learning.

\begin{problem}[Generalized Mean Prediction]\label{prob:mean}
    Let $S \subseteq \R^d$ be a convex set and $D: S\times S \to \R$. A sequence $x \in S^n$ of length $n$ is chosen at the beginning. The predictor observes $x_1, x_2, \ldots, x_n$ one by one. At any time step $t \in \{0, 1, \ldots, n - 1\}$, after seeing the first $t$ entries of $x$, the predictor may choose a window length $w \in [n - t]$ and predict the average of $x_{t+1}, \ldots, x_{t+w}$. Once a prediction is made, the procedure terminates immediately. If the prediction is $\hat x$ and the actual average is $\bar x = \frac{1}{w}\sum_{i=1}^{w}x_{t+i}$, the predictor incurs a loss of $D(\hat x, \bar x)$.
\end{problem}

Recall that for a differentiable convex function $f$, the Bregman divergence $D_f$ is defined as $D_f(x, y) = f(x) - f(y) - \nabla f(y)^{\top}(x-y)$ and is non-negative. In the analysis of selective learning algorithms, the excess risk turns out to be closely related to the Bregman loss defined by a convex function over $\R^{|\L|}$, where each coordinate corresponds to one model in $\L$. Thus, we will focus on the case that the loss function $D$ is defined by a Bregman divergence: for differentiable and convex $f: S \to \R$, we define the $f$-loss as $D(\hat x, \bar x) \coloneqq D_f(\hat x, \bar x) + D_f(\bar x, \hat x)$.

\begin{example}\label{ex:mean-est}
    For $S = [0, 1]$ and $f(x) = \frac{1}{2}x^2$, Problem~\ref{prob:mean} with $f$-loss is exactly the mean estimation problem with squared loss studied by~\cite{drucker2013high,qiao2019theory}.
\end{example}

\subsection{Overview of Results}

\paragraph{Tight bounds for selective learning.} Our main result is the following upper bound on the optimal excess risk in selective learning.
\begin{theorem}\label{thm:hybrid}
    There exists a selective learning algorithm that, on model class $\L$ and an arbitrary sequence of $n$ data points, achieves an excess risk of $O\left(\frac{\log\log|\L| + \log\log n}{\log n}\right)$ in expectation.
\end{theorem}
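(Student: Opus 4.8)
The plan is to reduce selective learning (Problem~\ref{prob:learn}) to the generalized mean prediction problem (Problem~\ref{prob:mean}) in dimension $d=|\L|$, with the reduction mediated by the log-partition (``soft-min'') potential. Fix a temperature $\eta>0$ to be chosen, and let $g_\eta(u)\coloneqq \tfrac1\eta\log\sum_{j=1}^{|\L|}e^{-\eta u_j}$; this is convex, self-concordant, and on $[0,1]^{|\L|}$ takes values in an interval of length $O(1)$ (affine shifts are irrelevant to Bregman divergences). Encode the data as the loss vectors $x_i\coloneqq(\ell_1(z_i),\dots,\ell_{|\L|}(z_i))\in[0,1]^{|\L|}$, which the learner can compute online from $z_i$. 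Run the selective mean prediction algorithm (Theorem~\ref{thm:mean-prediction}) with $S=[0,1]^{|\L|}$ and $g_\eta$-loss $D(\hat x,\bar x)=D_{g_\eta}(\hat x,\bar x)+D_{g_\eta}(\bar x,\hat x)$ on $x_1,\dots,x_n$; it outputs a window $[t+1,t+w]$ and a prediction $\hat x\in[0,1]^{|\L|}$ of the window-average loss vector $\bar x=\tfrac1w\sum_{i=1}^w x_{t+i}$, with small expected $g_\eta$-loss. The selective learner commits to the same window and to the randomized model obtained by sampling $j$ from the Gibbs distribution $p_{\hat x}\propto e^{-\eta\hat x_j}$ and outputting $\hat\ell=\ell_j$; this is exactly an exponential-weights rule applied to the \emph{predicted} future losses.

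The core of the analysis is a bound linking excess risk to the $g_\eta$-loss. The learner's expected excess risk equals $\langle p_{\hat x},\bar x\rangle-\min_j\bar x_j$. Using the variational characterization of the Gibbs distribution (that $p_u$ minimizes $p\mapsto\langle p,u\rangle-\tfrac1\eta H(p)$ over the simplex, where $H$ is Shannon entropy), together with $H\le\log|\L|$ and the fact that $\nabla g_\eta$ is precisely minus the Gibbs map, a short calculation gives
\[
\langle p_{\hat x},\bar x\rangle-\min_j\bar x_j \;\le\; \langle p_{\hat x}-p_{\bar x},\,\bar x-\hat x\rangle+\tfrac{\log|\L|}{\eta} \;=\; D_{g_\eta}(\hat x,\bar x)+D_{g_\eta}(\bar x,\hat x)+\tfrac{\log|\L|}{\eta},
\]
where the entropy terms of the comparator $p_{\bar x}$ cancel, and the equality is the standard symmetrization identity $D_f(x,y)+D_f(y,x)=(\nabla f(x)-\nabla f(y))^\top(x-y)$ applied to $f=g_\eta$. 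Thus the expected excess risk is at most the $g_\eta$-loss incurred by the mean predictor, plus $\tfrac{\log|\L|}{\eta}$. (In the special case $|\L|=2$ this should essentially recover the one-dimensional selective prediction bound, consistently with Example~\ref{ex:mean-est}.)

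It then remains to choose $\eta$ and invoke the mean prediction guarantee. Since $g_\eta$ is self-concordant and $O(1)$-ranged on $[0,1]^{|\L|}$, the selective mean prediction theorem should yield an expected $g_\eta$-loss of the Ramsey-type form $O\big(\tfrac{\log\log n+\log\log|\L|}{\log n}\big)$ for a suitable $\eta=\Theta(\operatorname*{poly}\log(|\L|n))$: this choice makes $\tfrac{\log|\L|}{\eta}$ negligible while keeping every $\eta$-dependent quantity at $O(\log\log|\L|+\log\log n)$. Combining this with the excess-risk bound above gives the claimed $O\big(\tfrac{\log\log|\L|+\log\log n}{\log n}\big)$ bound on the expected excess risk, proving Theorem~\ref{thm:hybrid}.

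The main obstacle is the mean prediction guarantee itself: establishing, via the ``some timescale has stable moving averages'' phenomenon, that the $g_\eta$-loss can be driven to $O\big(\tfrac{\log\log n+\log\log|\L|}{\log n}\big)$ \emph{without} any polynomial dependence on the ambient dimension $|\L|$ or on the smoothness ($\sim\eta$) of $g_\eta$ --- i.e.\ that both enter only through logarithms. This is exactly where self-concordance of the log-partition potential is essential, since it provides the approximate Pythagorean/triangle behavior of Bregman divergences needed to telescope the losses across a geometric hierarchy of window lengths while staying inside the region where the second-order geometry is controlled; it is also the step that genuinely goes beyond the scalar analysis of~\cite{drucker2013high,qiao2019theory}. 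A secondary concern is that the self-concordance constants relevant to the argument may behave differently for small versus large $\eta$, so that one cannot use a single temperature across the whole parameter range; reconciling the large-$\eta$ regime (needed to kill $\tfrac{\log|\L|}{\eta}$) with a more direct exponential-weights argument at moderate $\eta$ is presumably the reason the algorithm is called the \emph{hybrid} exponential weights algorithm.
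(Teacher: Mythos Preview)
Your reduction to generalized mean prediction with the log-sum-exp potential is essentially the analysis behind Theorem~\ref{thm:bounded-recall-upper}, not Theorem~\ref{thm:hybrid}. Carrying it out honestly: $g_\eta$ is $1$-bounded on $[0,1]^{|\L|}$ but its self-concordance constant is $C_1=\Theta(\eta)$ (scaling $f\mapsto cf$ leaves $|g'''|/g''$ unchanged, and the paper shows $\eta g_\eta$ is $(4\eta)$-self-concordant). So Theorem~\ref{thm:mean-prediction} gives an expected $g_\eta$-loss of $O(\eta/\log n)$, and your decomposition yields expected excess risk $O(\eta/\log n + \log|\L|/\eta)$. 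Optimizing over $\eta$ gives $O(\sqrt{\log|\L|/\log n})$ --- exactly Theorem~\ref{thm:bounded-recall-upper}, and exponentially worse in $|\L|$ than the claimed bound.

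Your ``main obstacle'' paragraph correctly identifies where the argument breaks, but the hoped-for improvement is provably impossible along this route. Algorithm~\ref{alg:mean-prediction} is bounded-recall in the sense of Definition~\ref{def:learner}: to predict a window of length $w$ it looks only at the preceding $w$ data points. Your learner inherits this property, and Theorem~\ref{thm:bounded-recall-lower} then forces an expected excess risk of $\Omega(\min\{\log|\L|/\log n,1\})$. No refinement of the self-concordance analysis can get around this; the $\log\log|\L|$ dependence is simply unattainable for any bounded-recall learner.

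The paper's proof of Theorem~\ref{thm:hybrid} takes a different path that is \emph{not} bounded-recall. The algorithm fixes a parameter $\Delta$, picks a random window length $w$, but bases its exponential-weights distribution on the $2^{\Delta}\cdot w$ data points preceding (and including) the prediction window. Conditioning on $w$ and the outer window, this is exactly the classical exponential-weights algorithm on a $2^{\Delta}$-step experts instance, contributing regret $O(\sqrt{\log|\L|/2^{\Delta}})$. The remaining term is the gap $L_{i+\Delta}-L_i$ between the ``average learnability'' at scales $2^{i+\Delta}$ and $2^i$; averaging over $i$ telescopes this to $O(\Delta/\log n)$. Balancing with $\Delta=\Theta(\log\log|\L|+\log\log n)$ gives the bound. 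The ``hybrid'' in the name refers to running exponential weights at a randomly chosen timescale, not to mixing two temperature regimes.
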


We prove Theorem~\ref{thm:hybrid} in Section~\ref{sec:hybrid} using a new algorithm termed the \emph{hybrid exponential weights}. This algorithm is non-adaptive but not bounded-recall in the sense of Definition~\ref{def:learner}. The key idea behind the algorithm is to apply the classic exponential weights algorithm to a hypothetical instance of the experts problem, in which every single time step corresponds to a carefully chosen block of data points.  

Theorem~\ref{thm:hybrid} resolves the open problem raised by~\cite{qiao2019theory} regarding whether an excess risk of $\polylog(|\L|)/\polylog(n)$ can be achieved. In fact, the theorem gives a much better $\log\log|\L|$ dependence, which might be surprising considering that the learning error bounds in many learning settings (e.g., PAC learning and online learning) are polynomial in $\log |\L|$, and $\log |\L|$ can be naturally interpreted as the ``dimension'' of the model class. In Section~\ref{sec:discuss}, we highlight a few other contrasts between the results in selective learning and PAC learning.

We complement Theorem~\ref{thm:hybrid} with a lower bound result showing that an $\Omega(\log\log|\L|/\log n)$ excess risk is unavoidable when either of the following two holds: (1) the learner is non-adaptive in the sense of Definition~\ref{def:learner}, i.e., it chooses the prediction window independently of the data sequence; or (2) $|\L|$ is large compared to $n$.

\begin{theorem}\label{thm:learn-lower}
    For $n \ge 1$, $m \ge 2$ and any selective learning algorithm, suppose that either (1) the algorithm is non-adaptive or (2) $m \ge n^{\omega(\log\log n)}$.
    Then, there exists a selective learning instance with $|\L| = m$ and sequence length $n$ such that the expected excess risk of the algorithm is at least $\Omega\left(\min\left\{\frac{\log\log |\L|}{\log n}, 1\right\}\right)$.
\end{theorem}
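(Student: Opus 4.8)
The plan is to invoke Yao's minimax principle: to lower-bound the expected excess risk of an arbitrary randomized learner, it suffices to produce a \emph{distribution} over selective learning instances on which every deterministic learner of the relevant type incurs expected excess risk $\Omega(\min\{\log\log m/\log n,1\})$. After a routine reduction we may assume $\Z = [0,1]^m$, the $i$-th model is the $i$-th coordinate function, and the excess risk of predicting model $\hat\imath$ on a window $W$ is $\frac1{|W|}\sum_{s\in W}(z_s)_{\hat\imath} - \min_i\frac1{|W|}\sum_{s\in W}(z_s)_i$. Write $k$ for a ``depth'' parameter, chosen as $k := \Theta(\min\{\log\log m,\log n\})$ so that the quantity $\Theta(k/\log n)$ we aim for equals $\Theta(\min\{\log\log m/\log n,1\})$.

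The hard distribution I would use is \emph{recursive and self-similar}, with $k$ levels of structure each contributing $\Omega(1/\log n)$ to the excess risk. Level $\ell$ partitions $[n]$ into blocks on the $\ell$-th of $k$ geometrically spaced scales and gives each block an independent uniform ``hidden label'' drawn from a fixed ``menu'' $[M]$; a model is indexed by a string in $[M]^k$ picking one menu item per level, so that $M = m^{1/k}$ and there are exactly $m$ models. The loss of a model $a=(a_0,\dots,a_{k-1})$ on a data point is $\frac1k\sum_\ell(\text{level-}\ell\text{ penalty})$, where the level-$\ell$ penalty charges $a_\ell$ essentially for disagreeing with the hidden label of the level-$\ell$ block containing that point, perturbed by a small $\pm\Theta(1/\log n)$ term so that at any single level the best menu item for a window depends on coins the learner has not seen. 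The reason only $\log\log m$ (rather than $\log m$) of hardness can be extracted is that the $k$ levels must \emph{share one menu $[M]$} --- letting them branch independently would need $m^k$ models --- and the hypothesis $m \ge n^{\omega(\log\log n)}$ is exactly what forces $M = m^{1/k}$ to be super-polynomial in $n$, which is in turn what lets the construction fool an adaptively chosen window. For a non-adaptive learner one needs only that the distribution fools each \emph{fixed} window in expectation, rather than conditionally on every prefix as in the adaptive case; this much weaker requirement is met for every $m \ge 2$, and degenerates gracefully to the one-dimensional $\Omega(1/\log n)$ selective-mean-prediction lower bound (cf.\ Example~\ref{ex:mean-est}) when $m$ is small.

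The crux is a structural lemma asserting that whatever window $W$ the learner selects, $W$ straddles ``fresh'' (not-yet-observed) block boundaries at $\Omega(k)$ of the $k$ levels simultaneously, and that conditioned on the prefix the learner's fixed menu item at each such level is wrong by $\Omega(1/\log n)$ in expectation, so that summing over levels gives $\Omega(k/\log n)$. The difficulty is to calibrate the scale spacing and the menu size so that this holds for \emph{all} windows at once: if the blocks are too coarse the learner can place its window inside a single already-observed block and pay nothing, and if they are too fine the fresh hidden labels average out to a value it can predict. I expect to prove the lemma by induction on $k$, carrying along the conditional law of the per-level plurality labels given everything the learner has seen, and to check tightness (so that Theorem~\ref{thm:hybrid} is genuinely matched) within the same calibration. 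This combinatorial bookkeeping --- making the instance hard at every window while spending only $m$ models --- is where essentially all of the work lies.
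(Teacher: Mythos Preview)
Your product/tensor construction has a genuine gap that I do not see how to close. Because the loss is the \emph{average} $\frac{1}{k}\sum_\ell p_\ell$ of per-level penalties, the excess risk decouples as $\frac{1}{k}\sum_\ell e_\ell$, where $e_\ell$ is the per-level excess. Now fix any window $W$ of length $w$. At a level $\ell$ with block size $B_\ell \gg w$, the learner has (generically) already seen part of the unique level-$\ell$ block containing $W$ and can read off its hidden label, so $e_\ell \approx 0$. At a level with $B_\ell \ll w$, the window contains many fresh i.i.d.\ labels; the best fixed menu item is just the plurality label, which matches a $\frac{1}{M}+o(1)$ fraction, and the learner's choice matches the same fraction in expectation, so again $e_\ell = o(1)$. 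Only the $O(1)$ levels with $B_\ell$ within a constant factor of $w$ can yield $e_\ell = \Omega(1)$. Hence the total excess is $O(1/k)$, and your structural lemma (``$W$ is hard at $\Omega(k)$ levels simultaneously'') is false for this construction. Your $\pm\Theta(1/\log n)$ perturbation does not rescue this: even if it forces a tiebreak at every level, the per-level contribution after the $\frac{1}{k}$ averaging is only $O(\tfrac{1}{k\log n})$, and summing over $k$ levels gives $O(1/\log n)$, not $\Omega(k/\log n)$. (This is also the arithmetic slip in ``summing over levels gives $\Omega(k/\log n)$'': you defined the loss as an average, so the excess is averaged too.)

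The paper's proof avoids product structure entirely. For Case~(1) it does not build one instance that is hard at all scales; instead it partitions the window-length axis into $\Theta(\tfrac{\log n}{\log\log m})$ multiplicative bands and exhibits, for each band, a \emph{separate} block-based instance on which any learner whose window lands in that band incurs $\Omega(1)$ excess (with $m$ i.i.d.\ random block sequences, one of which is all-zero on the few unobserved blocks). Non-adaptivity then forces some band to have mass $\Omega(\tfrac{\log\log m}{\log n})$. For Case~(2) the paper draws the $m$ loss sequences \emph{independently} from the anti-concentrated binary-tree martingale of the selective-prediction lower bound; the $\log\log m$ factor arises because with $m$ independent sequences, one of them is ``lucky'' --- its values stay below $1/2$ for $\Delta = \Omega(\log\log m)$ additional tree levels below the critical node of the prediction window --- an inherently non-product event whose probability is $(2k)^{-k}\cdot 2^{-\Theta(2^\Delta)}$, which is where the threshold $m \ge n^{\omega(\log\log n)}$ enters. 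The point is that the $\log\log m$ is bought by coincidence across $m$ i.i.d.\ models, not by tensorizing $k$ independent sub-problems.
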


We defer the proof of Theorem~\ref{thm:learn-lower} to  Appendix~\ref{sec:missing-lower}. The first case of the theorem is proved by constructing $\frac{\log n}{\log\log|\L|}$ hard instances such that for each instance, when the window length chosen by the learner falls into a certain interval, the expected excess risk is $\Omega(1)$. Moreover, the $\frac{\log n}{\log\log|\L|}$ intervals form a partition of $[1, n]$. Since the window length has to fall into one of these intervals with probability $\Omega\left(\frac{\log\log |\L|}{\log n}\right)$, the lower bound would follow from the non-adaptivity of the learner. The proof of the second case is more technical and builds on the construction in~\cite{qiao2019theory} of a distribution over sequences that exhibits a significant amount of anti-concentration at every timescale. We analyze the selective learning instance defined by drawing $m = |\L|$ independent sequences from the above distribution, and setting the loss of the $i$-th model on $z_1, z_2, \ldots, z_n$ to be exactly the $i$-th sequence.

\paragraph{A near-optimal bounded-recall algorithm.} \cite{qiao2019theory} proposed an empirical risk minimization (ERM) algorithm for selective learning and proved a weaker excess risk bound of $O(\sqrt{|\L|/\log n})$. The ERM algorithm picks a random prediction window appropriately, and then chooses the model with the best performance on the interval immediately before the prediction window. Unfortunately, this $\poly(|\L|)$ dependence is shown to be tight for ERM: there exists a family of instances where $|\L| = \Theta(\log n)$ and ERM incurs a constant excess risk. Roughly speaking, this is because ERM is too sensitive to small differences between the models, and thus can be tricked by the adversary into incurring a high excess risk.

Given the success of exponential weights in proving Theorem~\ref{thm:hybrid}, it is natural to ask whether a lower excess risk can be achieved using a ``soft'' version of ERM. This variant is termed the \emph{bounded-recall exponential weights} algorithm\footnote{The algorithm is indeed a bounded-recall learner in the sense of Definition~\ref{def:learner}.} and formally defined as Algorithm~\ref{alg:exp-weights}. The key change is that the decision rule of the algorithm is much smoother than that of ERM: instead of choosing the model that exactly minimizes the ``empirical'' risk, the algorithm randomly draws a model with probability exponential in the negative empirical risk. In the following, $\U(S)$ denotes the uniform distribution over a finite set $S$.

\begin{algorithm2e}[H]
    \caption{Bounded-Recall Exponential Weights} \label{alg:exp-weights}
    \KwIn{Model class $\L=\{\ell_1,\ell_2,\ldots,\ell_{|\L|}\}$ and online access to data sequence $z \in \Z^n$. Parameter $\alpha > 0$.}
    $k \gets \lfloor\log_2 n\rfloor$; Draw $k' \sim  \U([k])$\;
    Draw $t \sim \U(\{0, 2^{k'}, 2\cdot 2^{k'}, 3 \cdot 2^{k'}, \ldots, 2^k - 2^{k'}\})$\;
    Observe $z_1, z_2, \ldots, z_{t+2^{k'-1}}$\;
    \lFor{$i = 1, 2, \ldots, |\L|$} {
	    $u_i \gets \frac{1}{2^{k'-1}}\sum_{i=1}^{2^{k'-1}}\ell_i(z_{t+i})$
    }
    Draw $\hat i \in [|\L|]$ randomly with probability proportional to $\exp(-\alpha u_i)$\;
    Output model $\ell_{\hat i}$ for interval $z_{t+2^{k'-1}+1}, \ldots, z_{t+2^{k'}}$\;
\end{algorithm2e}

The following theorem states that this simple change in the decision rule indeed improves the dependence on $|\L|$ from $\poly(|\L|)$ to $\polylog(|\L|)$. We prove Theorem~\ref{thm:bounded-recall-upper} in Section~\ref{sec:bounded-recall}.

\begin{theorem}\label{thm:bounded-recall-upper}
    On model class $\L$ and an arbitrary sequence of $n$ data points, Algorithm~\ref{alg:exp-weights} with parameter $\alpha = \Theta(\sqrt{\log n\cdot\log|\L|})$ achieves an excess risk of $O\left(\sqrt{\frac{\log |\L|}{\log n}}\right)$ in expectation.
\end{theorem}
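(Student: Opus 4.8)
The plan is to reduce the excess risk of Algorithm~\ref{alg:exp-weights} to the $f$-loss of a natural predictor for the generalized mean prediction problem (Problem~\ref{prob:mean}) in dimension $|\L|$, with $f$ the negated, scaled log-sum-exp function, and then to control that $f$-loss by a telescoping argument over dyadic scales combined with the self-concordance of log-sum-exp (this last bound is essentially what Theorem~\ref{thm:mean-prediction} provides). Write $\Phi(v) = -\frac1\alpha\log\!\big(\frac1{|\L|}\sum_{i=1}^{|\L|}e^{-\alpha v_i}\big)$ and $f = -\Phi$. Then $f$ is convex, $\nabla f(v) = -p(v)$ where $p(v)_i \propto e^{-\alpha v_i}$ is exactly the distribution from which Algorithm~\ref{alg:exp-weights} samples $\hat i$, and $\min_i v_i \le \Phi(v) \le \min_i v_i + \frac{\log|\L|}{\alpha}$.

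First I would fix the algorithm's internal randomness $(k',t)$ and let $a,b\in[0,1]^{|\L|}$ be the vectors of average losses of the models over the recall window $z_{t+1},\dots,z_{t+2^{k'-1}}$ and over the prediction window $z_{t+2^{k'-1}+1},\dots,z_{t+2^{k'}}$, and put $m=(a+b)/2$, the average loss vector over both windows together. The conditional expected loss is $\sum_i p(a)_i b_i$ and the benchmark is $\min_i b_i$. Using the Gibbs variational identity, which gives $\sum_i p(a)_i a_i \le \Phi(a)$, together with the definition of the Bregman divergence $D_f$, a few lines of algebra yield
\[
    \textstyle\sum_i p(a)_i b_i - \min_i b_i \;\le\; D_f(b,a) + \frac{\log|\L|}{\alpha} \;\le\; \big(D_f(a,b) + D_f(b,a)\big) + \frac{\log|\L|}{\alpha},
\]
where the middle term is precisely the $f$-loss incurred in Problem~\ref{prob:mean} by the predictor that, on the window $z_{t+2^{k'-1}+1},\dots,z_{t+2^{k'}}$, predicts the average $a$ over the equally long block immediately preceding it. Since the laws of $(k',t)$ in Algorithm~\ref{alg:exp-weights} are exactly those of this ``predict-the-previous-block'' strategy run at a uniformly random dyadic scale, taking expectations reduces the theorem to showing $\mathbb{E}[D_f(a,b)+D_f(b,a)] = O(\alpha/\log n)$.

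To prove that bound I would proceed in two parts. (i) For each dyadic level $\ell$, concavity of $\Phi$ applied to each level-$\ell$ block and its two level-$(\ell-1)$ halves shows that the average of $\Phi$ over level-$\ell$ block averages is nondecreasing in $\ell$, and the increment is, up to a factor of $2$, the average of $D_f(a,m)+D_f(b,m)$ over the corresponding halvings; telescoping over the $k=\lfloor\log_2 n\rfloor$ levels therefore bounds $\sum_{\ell}\mathbb{E}[D_f(a,m)+D_f(b,m)]$ by $O\big(\mathrm{osc}_{[0,1]^{|\L|}}\Phi\big)=O(1)$, so the expectation over a uniform level of the midpoint divergence $D_f(a,m)+D_f(b,m)$ is $O(1/\log n)$. (ii) Log-sum-exp is self-concordant in the $\ell_\infty$ sense with parameter $\Theta(\alpha)$: $D^3 f(v)[h,h,h]$ equals $-\alpha^2$ times the third central moment of $h$ under $p(v)$, so $|D^3 f(v)[h,h,h]|\le 2\alpha\|h\|_\infty\, D^2 f(v)[h,h]$; writing $\psi(t) = (a-b)^\top\nabla^2 f\big(a+t(b-a)\big)(a-b)$, this forces $|(\log\psi)'|\le 2\alpha\|a-b\|_\infty\le 2\alpha$, and comparing $D_f(a,b)+D_f(b,a)=\int_0^1\psi(t)\,dt$ with $D_f(a,m)+D_f(b,m)=\int_0^1\min(t,1-t)\psi(t)\,dt$ gives $D_f(a,b)+D_f(b,a) = O(\alpha)\cdot(D_f(a,m)+D_f(b,m))$. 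Combining (i) and (ii) gives $\mathbb{E}[D_f(a,b)+D_f(b,a)] = O(\alpha/\log n)$; together with the reduction this yields $\mathbb{E}[\text{excess risk}] = O\!\big(\tfrac{\alpha}{\log n} + \tfrac{\log|\L|}{\alpha}\big)$, and taking $\alpha = \Theta(\sqrt{\log n\cdot\log|\L|})$ balances the two terms at $O(\sqrt{\log|\L|/\log n})$.

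The main obstacle is step (ii): one must show that the price of turning the telescoped midpoint divergences into the symmetric losses $D_f(a,b)+D_f(b,a)$ is only a factor $O(\alpha)$ and not a factor exponential in $\alpha$ --- i.e., that $\int_0^1\psi$ and $\int_0^1\min(t,1-t)\psi$ differ by at most a factor $O(\alpha\|a-b\|_\infty+1)$ --- which relies on the sharp $\ell_\infty$-self-concordance of log-sum-exp (the Hessian changes by at most a constant factor over $\ell_\infty$-balls of radius $O(1/\alpha)$) rather than a crude Lipschitz estimate of the Hessian. A minor additional point is bookkeeping the additive $\frac{\log|\L|}{\alpha}$ slack coming from the gap between $\Phi$ and $\min_i(\cdot)$; it is harmless for the chosen value of $\alpha$.
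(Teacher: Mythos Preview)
Your proposal is correct and follows the same core strategy as the paper: reduce the excess risk to the $f$-loss of generalized mean prediction with $f$ the (scaled) log-sum-exp, and control that loss via self-concordance plus telescoping over dyadic scales. Your parts~(i) and~(ii) are precisely the content of Theorem~\ref{thm:mean-prediction} and Lemma~\ref{lem:loss-bound} in the paper, respectively; your integral comparison in~(ii) --- $\int_0^1\psi$ versus $\int_0^1\min(t,1-t)\psi$ under $|(\log\psi)'|\le 2\alpha$ --- is an equivalent reformulation of Lemma~\ref{lem:loss-bound}, and you correctly flag that obtaining an $O(\alpha)$ (rather than $e^{O(\alpha)}$) factor here is the crux.

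Where you genuinely differ is in the initial reduction. The paper breaks the excess risk into three terms, $\sum_i P_u(i)(v_i-u_i)+\sum_i P_u(i)(u_i-u_{\min})+(u_{\min}-v_{\min})$, and bounds them separately; the first and third require invoking an external result (Theorem~6 of~\cite{qiao2019theory} on concatenation-concave functions) to control $|f(u)-f(v)|$ and $|u_{\min}-v_{\min}|$, each contributing an extra $O(1/\sqrt{\log n})$. Your Gibbs-variational step collapses all three terms into the single clean inequality $\sum_i p(a)_i b_i-\min_i b_i\le D_f(b,a)+\log|\L|/\alpha$, which makes the argument self-contained and yields $O(\alpha/\log n+\log|\L|/\alpha)$ directly, without the auxiliary $1/\sqrt{\log n}$ slack. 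The paper's decomposition, on the other hand, makes the roles of ``generalization error'', ``optimization error'', and ``learnability drift'' more transparent.
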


Although the excess risk of the above bounded-recall algorithm is exponentially suboptimal compared to Theorem~\ref{thm:hybrid}, it turns out to be nearly optimal (up to a square root) among all bounded-recall algorithms. In other words, to achieve the optimal rate in Theorem~\ref{thm:hybrid}, it is critical that the algorithm considers substantially more data points than the prediction window length $w$. The proof of the following theorem is deferred to Appendix~\ref{sec:missing-lower-bounded-recall}.

\begin{theorem}\label{thm:bounded-recall-lower}
    For any $n \ge 1$, $m \ge 2$ and any bounded-recall algorithm, there is an instance of selective learning with $|\L| = m$ and sequence length $n$ such that the expected excess risk of the algorithm is at least $\Omega\left(\min\left\{\frac{\log |\L|}{\log n}, 1\right\}\right)$.
\end{theorem}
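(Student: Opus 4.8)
The natural route is Yao's minimax principle. It suffices to exhibit a distribution $\P$ over selective learning instances with $|\L| = m$ and sequence length $n$ for which every \emph{deterministic} bounded-recall learner has expected excess risk $\Omega(\min\{\log m/\log n, 1\})$; a randomized bounded-recall learner is a mixture of deterministic ones, so the same bound transfers to it, and in particular some single instance is bad. A deterministic bounded-recall learner is described by a window position $t$, a window length $w$, and a map $g : \Z^w \to \L$ that sees only $z_{t-w+1}, \dots, z_t$ and names the model to be used on $z_{t+1}, \dots, z_{t+w}$.

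The construction is a recursive ``planted-target'' instance laid out on a tree. Let $b$ be a branching factor (morally $b \approx m$) and regard $[n]$ as the leaves of a depth-$D$ $b$-ary tree with $D \approx \log_b n = \Theta(\log n/\log m)$, so that a node at level $j$ (counting from the root) is a block of $n/b^j$ consecutive points. Sample a target model $g(N)\in\L$ for every node $N$ in top-down fashion: $g(\mathrm{root})$ is uniform, and given $g(N)$ we pick a uniformly random size-$c$ subset $S$ of $N$'s $b$ children (with $c = \lfloor\rho b\rfloor$ for a decay parameter $\rho\in(0,1)$), set $g(N') = g(N)$ for each $N'\in S$, and for each $N'\notin S$ \emph{refresh}, drawing $g(N')$ independently and uniformly. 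The loss of model $\ell$ at a point $p$ is $0$ if $\ell = g(p)$ and $\tfrac12$ otherwise (so all losses lie in $[0,1]$). A short branching-process computation then shows that on any level-$j$ block $N$, the model $g(N)$ has average loss $\tfrac12 - \tfrac12\rho^{D-j}$, while every other model has average loss $\tfrac12 - o(1)$; hence the ``best-minus-next-best'' gap at scale $n/b^j$ is $\Theta(\rho^{D-j})$ as long as $\rho^{D-j} \gg 1/m$, which holds at the coarse scales that dominate the bound.

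The crux is that a bounded-recall learner cannot recover $g(N)$. Fix a deterministic learner with window length $w$; after the robustness reduction discussed below we may assume its window is a level-$j$ block $N$ and (choosing $t$ to the learner's advantage) its recall block is a \emph{sibling} $N^-$ of $N$. Everything the learner sees lies inside $N^-$: it learns $g(N^-)$ and all of $N^-$'s internal hierarchy, but --- crucially, and this is where bounded recall bites --- it can never ``zoom out'' to see the \emph{other} children of $N$'s parent $P$, which is exactly what an unrestricted learner would exploit, and is why this bound exceeds the $\Omega(\log\log m/\log n)$ of Theorem~\ref{thm:learn-lower}. Now $g(N) = g(P)$ only if $N$ was kept by $P$, an event of probability $\approx\rho$, and $g(N^-) = g(P)$ only if $N^-$ was kept, again probability $\approx\rho$ and essentially independent; moreover the internal structure of $N^-$ carries no information about whether $P$ retained $N^-$. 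Consequently the learner's best guess equals $g(N)$ with probability only $\approx\rho^2$, so its prediction is wrong with probability $\geq 1 - \rho^2 - o(1)$, and a wrong guess costs the gap, i.e.\ at least $\Omega(\rho^{D-j})$. The learner picks $j$ to make the gap smallest, namely $j = 1$ (the coarsest usable scale), giving expected excess risk $\Omega\big((1-\rho^2)\rho^{D-1}\big)$. Taking $\rho = \sqrt{(D-1)/(D+1)} = 1 - \Theta(1/D)$ maximizes $(1-\rho^2)\rho^{D-1}$ at $\Theta(1/D)$, and $1/D = \Theta(\log m/\log n)$.

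I expect two points to require real work. The first is the robustness reduction: a deterministic learner may use any $w$ and $t$, so its window need not be node-aligned nor of the form $n/b^j$. One must argue that a length-$w$ window with $n/b^{j+1} \leq w < n/b^j$ still fully contains a level-$j$ (or at worst level-$(j+1)$) block whose target remains hidden from the length-$w$ recall block, so the bound degrades only by constants; the cleanest fix is to take $b$ a power of $2$ and plant the structure at a geometric progression of scales dense enough that every $w$ is within a constant factor of a structured scale --- but one must then re-run the conditional-independence argument at the finer scales, since the delicate claim is precisely that a learner who sees the full contents of one ``sibling block'' still cannot infer which children their common parent retained. The second is the two boundary regimes: when $m$ is so large that $D = O(1)$ (in particular whenever $m \geq n$), the same construction with $D$ fixed to a small constant already yields $\Omega(1)$, which is the right bound; and when $m$ is very small, $b$ cannot be taken large enough to realize an effective decay $\rho = 1 - \Theta(1/D)$ with an integer $c$, so one instead uses a binary tree and obtains the desired $\rho$ either by randomizing $|S|$ or by planting the target only partway down a single subtree.
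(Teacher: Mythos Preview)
Your planted-target construction has a fatal gap that is not merely a ``robustness reduction'' issue. Take $b \approx m$ so that level-$1$ blocks have size $n/m$, and consider a deterministic bounded-recall learner that picks $w = n/(2m)$ and $t = n/(2m)$, so that both the recall window $[1,n/(2m)]$ and the prediction window $[n/(2m)+1,n/m]$ lie entirely inside the first level-$1$ block $P$, each containing $m/2$ of $P$'s level-$2$ children. From the recall window the learner reads $g(p)$ at every leaf $p$ (exactly one model has loss $0$ there), recovers $g(M_i)$ for each observed child $M_i$ by plurality, and then---since a $\rho \approx 1 - \Theta(1/D)$ fraction of those children were retained by $P$---recovers $g(P)$ itself by a second plurality vote. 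But $g(P)$ is also the \emph{optimal} model on the prediction window: its leaf-frequency there is $\approx \rho^{D-1}$, while any competitor (the target of a single refreshed child) achieves only $\approx \tfrac{2}{m}\rho^{D-2} \ll \rho^{D-1}$. So this learner's expected excess risk is $o(1/D)$, defeating your bound. The underlying issue is that your hierarchy leaks: once the recall window covers many siblings, the learner \emph{can} ``zoom out'' to the parent---precisely the move you argued bounded recall forbids when the recall window is a single sibling.

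Your proposed fix of taking $b$ small (e.g.\ $b=2$) to densify the scales blocks this attack but simultaneously kills the $\log m$ factor: the depth becomes $D = \Theta(\log n)$ and your optimization over $\rho$ yields only $\Omega(1/\log n)$. The paper's proof avoids this tension by a completely different construction. It draws $m$ \emph{independent} block-constant binary loss sequences at a set of $\log_2 m$ scales (chosen, using non-adaptivity, from the learner's window-length distribution), plus one all-zero sequence. For $w \in [2^{k_i},2^{k_i+1})$ there are ``good'' sequences (block size $\ge w$) and ``bad'' ones (block size $2^{k_i-1}$); within a length-$w$ recall window a bad sequence mimics each type of good sequence with probability $\Omega(1)$, and a counting argument shows the learner picks a bad model with constant probability and incurs $\Omega(1)$ excess risk. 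The $\log m$ factor enters because $\log_2 m$ such scales can be packed into one instance, covering total window-length probability $\Omega(\log m/\log n)$.
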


\paragraph{Generalized mean prediction.} Our analysis of Algorithm~\ref{alg:exp-weights} builds on a new result for the generalized mean prediction setting, which is based on the following definition of boundedness and self-concordance.

\begin{definition}\label{def:bounded}
    Function $f: S \to \R$ is $C_0$-bounded if $\sup_{x \in S}f(x) - \inf_{x\in S}f(x) \le C_0$.
\end{definition}

\begin{definition}\label{def:self-concordance}
    Convex function $f: S \to \R$ is $C_1$-self-concordant if $f$ is three-times differentiable and the following holds for any $x, y \in S$: Let $g:[0, 1]\to\R$ be the restriction of $f$ to the line segment between $x$ and $y$, i.e., $g(t) \coloneqq f(x + t(y - x))$. Then, $|g'''(t)| \le C_1g''(t)$ for any $t \in [0, 1]$.
\end{definition}

Definition~\ref{def:self-concordance} differs from the usual definition of self-concordant functions (e.g., in~\cite{boyd2004convex}) in that the exponent on the second derivative is $1$ instead of $\frac{3}{2}$, and that our definition allows a general coefficient $C_1$. This makes the definition susceptible to affine transformations of the domain of $f$, i.e., scaling $S$ changes the parameter $C_1$. A similar definition of self-concordance was used by~\cite{tran2015composite} to analyze the convergence of gradient descent in convex optimization.

The algorithm that we analyze for generalized mean prediction is the same as the one in~\cite{drucker2013high,qiao2019theory}, which randomly chooses a timescale and a prediction window of that scale, and then makes a prediction based on the data points immediately before the prediction window. We formally state the algorithm as follows. Recall that $\U(S)$ denotes the uniform distribution over a finite set $S$.

\begin{algorithm2e}[H]
    \caption{Generalized Mean Prediction} \label{alg:mean-prediction}
    \KwIn{Online access to sequence $x \in S^n$.}
    $k \gets \lfloor\log_2 n\rfloor$; Draw $k' \sim  \U([k])$\;
    Draw $t \sim \U(\{0, 2^{k'}, 2\cdot 2^{k'}, 3\cdot 2^{k'}, \ldots, 2^k - 2^{k'}\})$\;
    Observe $x_1, x_2, \ldots, x_{t+2^{k'-1}}$\;
    $\hat x \gets \frac{1}{2^{k'-1}}\sum_{i=1}^{2^{k'-1}}x_{t+i}$\;
    Predict that the average of $x_{t+2^{k'-1}+1}, \ldots, x_{t+2^{k'}}$ equals $\hat x$\;
\end{algorithm2e}

\begin{theorem}\label{thm:mean-prediction}
    For any convex set $S \subseteq \R^d$ and convex function $f: S\to\R$ that is $C_0$-bounded and $C_1$-self-concordant, Algorithm~\ref{alg:mean-prediction} incurs an expected $f$-loss of $O\left(\frac{C_0(C_1 + 1)}{\log n}\right)$.
\end{theorem}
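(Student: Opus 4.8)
The plan is to expose and exploit the dyadic structure implicit in Algorithm~\ref{alg:mean-prediction}. Write $k \coloneqq \lfloor\log_2 n\rfloor$; the algorithm only reads $x_1,\dots,x_{2^k}$, so we may restrict attention to this prefix. Conditioned on the scale $k' = j$, the algorithm draws a uniformly random block among the $2^{k-j}$ dyadic blocks of length $2^j$ that partition $\{1,\dots,2^k\}$, and predicts the average over the right half of that block using the average over its left half. For a dyadic block $B$ of length $\ge 2$, let $\mu(B)$ denote the average of the corresponding entries of $x$, and let $B_L, B_R$ be the two halves of $B$, so $\mu(B) = \tfrac12(\mu(B_L)+\mu(B_R))$. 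Then the expected $f$-loss is exactly $\frac1k\sum_{j=1}^{k} 2^{j-k}\sum_{|B|=2^j} D(\mu(B_L),\mu(B_R))$. A useful first simplification is that the symmetrization in the definition of $D$ collapses: $D(\hat x,\bar x) = D_f(\hat x,\bar x)+D_f(\bar x,\hat x) = (\nabla f(\hat x)-\nabla f(\bar x))^{\top}(\hat x-\bar x) = \int_0^1 g''(t)\,dt$, where $g$ is the restriction of $f$ to the segment between $\hat x$ and $\bar x$ as in Definition~\ref{def:self-concordance}. Thus each per-block loss is a one-dimensional integral along a line segment, which is exactly the form to which the self-concordance hypothesis applies.

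Next I would isolate the role of $C_0$ through a telescoping potential argument. For a block $B$ with $|B|\ge 2$, define the Jensen gap $\Phi(B) \coloneqq \tfrac12 f(\mu(B_L)) + \tfrac12 f(\mu(B_R)) - f(\mu(B)) \ge 0$, and let $P_j \coloneqq 2^{j-k}\sum_{|B|=2^j} f(\mu(B))$ be the average of $f(\mu(B))$ over length-$2^j$ blocks (with $P_0 = 2^{-k}\sum_{i\le 2^k} f(x_i)$ and $P_k = f(\mu(\{1,\dots,2^k\}))$). Since the blocks of length $2^{j-1}$ are precisely the halves of the blocks of length $2^j$, one gets $P_{j-1} - P_j = 2^{j-k}\sum_{|B|=2^j}\Phi(B)$, so summing over $j = 1,\dots,k$ telescopes to $\sum_{j=1}^{k} 2^{j-k}\sum_{|B|=2^j}\Phi(B) = P_0 - P_k$. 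By convexity $P_0 \ge P_k$, and by $C_0$-boundedness $P_0 - P_k \le C_0$; hence $\frac1k\sum_{j=1}^{k} 2^{j-k}\sum_{|B|=2^j}\Phi(B) \le C_0/k$.

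It remains --- and this is the crux --- to compare the per-block loss with the per-block Jensen gap: I claim that $C_1$-self-concordance of $f$ implies $D(\mu(B_L),\mu(B_R)) \le O(C_1+1)\,\Phi(B)$ for every block $B$. Writing $g$ for the restriction of $f$ to the segment from $\mu(B_L)$ to $\mu(B_R)$ and $h \coloneqq g''$ (we may assume $h > 0$, else $g$ is affine and both sides vanish), we have $D(\mu(B_L),\mu(B_R)) = \int_0^1 h(t)\,dt$ from above, while applying Taylor's formula with integral remainder to $g$ at $0$, $\tfrac12$ and $1$ yields $\Phi(B) = \tfrac12\int_0^{1/2} t\,h(t)\,dt + \tfrac12\int_{1/2}^1 (1-t)\,h(t)\,dt$. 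Since self-concordance says exactly that $|h'| \le C_1 h$ on $[0,1]$, the claim reduces to the one-variable inequality: for every nonnegative $h$ on $[0,1]$ with $|h'|\le C_1 h$, $\int_0^1 h(t)\,dt \le O(C_1+1)\int_0^1 \min(t,1-t)\,h(t)\,dt$.

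The main obstacle is getting a \emph{linear} rather than exponential dependence on $C_1$ in this last inequality. The naive argument --- ``$\log h$ is $C_1$-Lipschitz, hence $h$ varies by at most a factor $e^{C_1}$ over $[0,1]$, hence $h \ge e^{-C_1}\int_0^1 h$ pointwise'' --- only gives the bound $O(e^{C_1})$. The right way to use self-concordance is that it caps the \emph{rate} at which $h$ can vary: $h$ cannot shrink by a constant factor over an interval shorter than $\Omega(1/C_1)$, so even if $\int_0^1 h$ is dominated by mass of $h$ near an endpoint (where the weight $\min(t,1-t)$ is small), that mass is spread over a sub-interval of length $\Omega(1/C_1)$ on a constant fraction of which the weight is already $\Omega(1/C_1)$. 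Concretely, I would split into cases according to whether the measure $h(t)\,dt$ places at least half of its mass on $[1/4,3/4]$ --- where $\min(t,1-t)\ge 1/4$, giving the bound with an absolute constant --- or at least a quarter of its mass on one of $[0,1/4]$, $[3/4,1]$, in which case the pointwise bound $h(t)\le e^{C_1|t-s|}h(s)$ shows that the $\nu$-average of $t$ over that end-interval (with $d\nu \propto h\,dt$) is $\Omega(1/(C_1+1))$, so the weighted integral is an $\Omega(1/(C_1+1))$ fraction of $\int_0^1 h$. Combining the three ingredients, the expected $f$-loss is at most $O(C_1+1)\cdot\frac1k\sum_{j=1}^{k} 2^{j-k}\sum_{|B|=2^j}\Phi(B) \le O(C_1+1)\cdot C_0/k$, and $k=\lfloor\log_2 n\rfloor = \Theta(\log n)$ gives the stated $O(C_0(C_1+1)/\log n)$.
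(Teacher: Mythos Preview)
Your proposal is correct and follows the same skeleton as the paper: both rest on the per-block comparison $D(\mu(B_L),\mu(B_R))\le O(C_1+1)\,\Phi(B)$ (this is the paper's Lemma~\ref{lem:loss-bound}, since $2\Phi(B)=f(u)+f(v)-2f(\mu)$), and your telescoping identity $\sum_{j} 2^{j-k}\sum_{|B|=2^j}\Phi(B)=P_0-P_k\le C_0$ is exactly the paper's induction on $k$, unrolled into a sum.

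The only real difference is in how the per-block inequality is established. You reduce it to the one-variable estimate $\int_0^1 h\le O(C_1+1)\int_0^1\min(t,1-t)\,h$ for nonnegative $h$ with $|h'|\le C_1 h$, and argue by a case split on where the mass of $h$ lies; this works (e.g., from $h(t)\ge h(s)e^{-C_1(t-s)}$ one gets $t\,h(t)\ge\int_0^t h(s)e^{-C_1(t-s)}\,ds$, and after Fubini and a further split at $s=1/8$ this yields $\int_0^{1/4} t\,h\ge\frac{1}{8+C_1}\int_0^{1/4}h$), though your sketch leaves these details to be filled in. The paper instead differentiates the quantity $(2C_1+4)[f(\mu+\Delta)+f(\mu-\Delta)-2f(\mu)]-D(\mu+\Delta,\mu-\Delta)$ radially in $\Delta$ and shows the directional derivative is nonnegative; after computation this reduces to the much simpler endpoint comparison $(C_1+1)\int_0^1 h\ge\tfrac12[h(0)+h(1)]$, which follows in one line from $h(t)\ge e^{-C_1 t}h(0)$ and $\frac{1-e^{-C_1}}{C_1}\ge\frac{1}{C_1+1}$. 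The paper's route yields the explicit constant $2C_1+4$ with less effort; yours is a more direct attack on the integral form but trades cleanliness for a fiddlier weighted-integral estimate.
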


We prove Theorem~\ref{thm:mean-prediction} in Appendix~\ref{sec:mean} using an induction similar to the analyses in~\cite{drucker2013high,qiao2019theory}. We emphasize that Theorem~\ref{thm:mean-prediction} applies to any data sequence $x$, which might be chosen adversarially against the prediction algorithm. Specifically, the theorem makes \emph{no distributional assumptions} on the data, and the expectation is only over the randomness in the algorithm. Thus, this result is of the same flavor as those in prior work on selective prediction.

\begin{example}
    In the setting of Example~\ref{ex:mean-est}, since $f(x) = \frac{1}{2}x^2$ is $\frac{1}{2}$-bounded and $0$-self-concordant on $S=[0, 1]$, applying Theorem~\ref{thm:mean-prediction} recovers the $O\left(\frac{1}{\log n}\right)$ upper bound for predicting the arithmetic mean of a bounded number sequence.
\end{example}

\subsection{Related Work}\label{sec:related}
    Most closely related to our work is the \emph{selective prediction} setting studied by~\cite{drucker2013high,qiao2019theory}. The setting models the prediction of certain statistics of the unseen data points in a data stream, rather than finding the best model that fits the unseen data. Specifically, \cite{drucker2013high} proved that given online access to an arbitrary binary sequence of length $n$, there is a predictor that selectively chooses a prediction window and predicts the average of the numbers in that window up to a squared error of $O(1 / \log n)$ in expectation. \cite{qiao2019theory} proved that this error is optimal up to constant factors, and also extended this positive result to more general functions beyond the arithmetic mean. The selective learning setting was first studied by~\cite{qiao2019theory} as an extension of  selective prediction. They proved that an algorithm based on empirical risk minimization (ERM) achieves an $O(\sqrt{|\L|/\log n})$ excess risk in expectation, and posed an open question regarding whether the dependence on $|\L|$ could be further improved.
    
    The exponential weights algorithm for the experts problem is due to \cite{littlestone1989weighted} and \cite{vovk1990aggregating}, and the idea has been extensively explored and generalized in subsequent work on online learning, e.g., \cite{freund1996predicting,freund1999adaptive}. The new algorithms proposed in this work differ from previous ones in that they disregard the data that are too far away from the prediction window, and compute the weights only with respect to the most recent data points at a similar timescale.
    
    Our work is part of a broader endeavor to understand the extent to which a learner could extract a non-trivial amount of information from non-i.i.d.\ and even worst-case data. This line of research dates back to at least the 1960s in the statistics literature. Early work along this line studied the possibility of being robust to a small amount of adversarial corruption under different contamination models and for various problems in estimation~\cite{huber1964robust} and learning~\cite{valiant1985learning,kearns1993learning}. There has been significant recent progress understanding the computational tractability of such robust estimation and learning problems; see e.g.~\cite{diakonikolas2016robust,lai2016agnostic,charikar2017learning,diakonikolas2018robustly,diakonikolas2019distribution} and the references therein. Note that in many of these works, the majority of the data are still assumed to be drawn i.i.d.\ from the underlying distribution, while the remainder could be arbitrary and adversarial.
    
    Another strand of literature focuses on time series data that come from processes with certain mixing properties. \cite{nobel1993note,yu1994rates} studied the uniform convergence property when the data sequence, though being dependent, comes from a stationary mixing process. \cite{agarwal2012generalization} studied the generalization guarantee of online learning algorithms trained on a data sequence generated by a mixing ergodic process.
    
    The recent work of~\cite{chen2020worst} introduced a framework for learning from worst-case data with only the knowledge of the process of partitioning $n$ data points into a training set and a test set. This framework incorporates the positive results on selective prediction as a special case, where the partition follows a chronological constraint. The authors provided an algorithm that computes a near-optimal prediction scheme that matches the optimal error up to a constant factor. Another recent work of~\cite{dagan2019learning} studied the problem of learning from weakly dependent data that satisfy Dobrushin's condition, and proved that the learning guarantees only degrade slightly compared to the i.i.d.\ setting.

\subsection{Discussion}\label{sec:discuss}

\paragraph{Further open problems.} One of the most obvious problems that we leave open is to close the gap between the bounds in Theorems \ref{thm:hybrid}~and~\ref{thm:learn-lower}. One possible starting point is the case that $|\L| = 2$, in which case   Theorem~\ref{thm:hybrid} gives an upper bound of $O(\frac{\log\log n}{\log n})$; if the learner can be adaptive, the current lower bound is vacuous since $|\L| \ge n^{\omega(\log\log n)}$ does not hold.  Even if we restrict ourselves to non-adaptive algorithms, there is still a $\log\log n$ multiplicative gap between $O(\frac{\log\log n}{\log n})$ and the $\Omega(\frac{1}{\log n})$ lower bound.

One aspect of selective learning that is not addressed by this work is its computational complexity. When the model class is exponentially large in $n$, straightforward implementations of Algorithms \ref{alg:exp-weights}~and~\ref{alg:hybrid-exp-weights} are computationally costly. Nevertheless, since both algorithms are based on exponential weights, they can be efficiently implemented as long as it is possible to efficiently sample from class $\mathcal{L}$ (according to the distribution defined by the data points). This efficient sampling is possible if $\mathcal{L}$ has certain structure, for example: (1) if $\L$ can be decomposed into the ``product'' of smaller concept classes; or (2) if $\mathcal{L} = \{\ell_w: w \in \mathcal{W}\}$ can be parametrized such that for every data point $z$, the loss $\ell_w(z)$ is convex in the parameter $w$ (e.g., linear regression under quadratic loss).

Even if no efficient sampling algorithms are unknown for class $\L$, it \emph{might} still be possible to achieve a small excess risk that is comparable to Algorithm~\ref{alg:hybrid-exp-weights} using a different algorithm that admits efficient implementations. We leave the exploration of this possibility as a direction for future work.

\paragraph{Cardinality vs VC dimension.} In learning theory, many positive results that hold for finite model classes can be extended to the infinite case, with the $\log|\L|$ term in the sample complexity replaced by certain complexity measure of the class, e.g., the VC dimension~\cite{vapnik1971uniform}. Given Theorem~\ref{thm:hybrid}, it is natural to ask whether the $\log\log|\L|$ term can be replaced by $\log d$ in general, where $d$ is the VC dimension of $\L$. More formally, we consider the binary classification case where $\Y = \{0, 1\}$ and $\ell(\hat y, y) = \I{\hat y \ne y}$ is the binary loss, and ask whether the optimal excess risk scales as $\frac{\log d}{\log n}$.

It turns out that this is not the case: there exist model classes with a constant VC dimension that cannot be selectively learned to a sub-constant excess risk for arbitrarily large $n$. Formally, we prove the following proposition in Appendix~\ref{sec:missing-discuss}:

\begin{proposition}\label{prop:vc-not-upper}
    There exists a model class $\F$ of VC dimension $1$ such that for any sequence length $n$, no learning algorithm could achieve an excess risk smaller than $\frac{1}{2}$ in expectation for class $\F$.
\end{proposition}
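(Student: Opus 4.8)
The plan is to let $\F$ be the class of one-dimensional threshold functions over $\X = \R$, namely $\F = \{f_\theta : \theta \in \R\}$ with $f_\theta(x) = \I{x \ge \theta}$, together with $\Y = \{0,1\}$ and the binary loss. This class has VC dimension exactly $1$: it shatters any single point, but for $x_1 < x_2$ no threshold realizes the labeling $f(x_1) = 1, f(x_2) = 0$. Since the excess risk never exceeds $1$, it suffices to exhibit, for each $n$, a distribution over sequences $z \in \Z^n$ on which every learner incurs expected excess risk exactly $\frac12$; an averaging (Yao-type) argument then produces a single worst-case sequence of length $n$, which establishes the proposition.

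The hard distribution is generated by an adaptive binary search against the learner. Maintain the interval $(\ell_i, r_i)$ of thresholds that remain consistent with $z_1, \ldots, z_i$, starting from $(\ell_0, r_0) = (0, 1)$: at step $i$ put the instance $x_i$ at the midpoint $m_i = (\ell_{i-1} + r_{i-1})/2$, draw a fresh unbiased bit $b_i$, set $y_i = b_i$, and shrink the consistent interval accordingly ($b_i = 1$ replaces it by $(\ell_{i-1}, m_i]$ and $b_i = 0$ by $(m_i, r_{i-1})$). Two facts then follow by induction: (i) the consistent interval always has length $2^{-i} > 0$, so the whole sequence $z_1, \ldots, z_n$ --- and hence every contiguous window of it --- is realized by some fixed threshold $\theta^\star$, which forces the benchmark $\inf_{\ell \in \F}\frac1w\sum_{i=1}^w \ell(z_{t+i})$ to be $0$ on every window; and (ii) $m_i$ lies strictly between $\ell_{i-1}$ and $r_{i-1}$, so any threshold sits on one definite side of $m_i$ and is therefore wrong on $z_i$ precisely when the fresh bit $b_i$ disagrees with that threshold's (prefix-determined) prediction on $x_i$.

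Given these facts the estimate is immediate. Fix any adaptive randomized learner; condition on its internal randomness and on the event that it stops at some time $t$ with prefix $z_1, \ldots, z_t$, so that the committed window length $w$ and threshold $\hat\theta$ are determined. For each $i \in [w]$, the prediction $\I{m_{t+i} \ge \hat\theta}$ is a function of $z_1, \ldots, z_{t+i-1}$ --- the midpoint $m_{t+i}$ depends only on the earlier bits and $\hat\theta$ only on the length-$t$ prefix --- whereas $y_{t+i} = b_{t+i}$ is an unbiased bit independent of $z_1, \ldots, z_{t+i-1}$; hence $\Ex{\ell_{\hat\theta}(z_{t+i}) \mid z_1, \ldots, z_{t+i-1}} = \frac12$. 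Averaging over $i \in [w]$ and removing the conditioning, the learner's expected average loss on its chosen window equals $\frac12$; since the benchmark is $0$, its expected excess risk is $\frac12$, and so for some realization of $z$ the expected excess risk is at least $\frac12$.

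The one step that requires care rather than ingenuity is the measurability bookkeeping in the last paragraph: one should verify that $w$ and $\hat\theta$ are measurable with respect to the observed prefix, that the learner's choice of the time $t$ is a genuine stopping time so that conditioning on the event of stopping at $t$ does not perturb the law of the future bits $b_{t+1}, b_{t+2}, \ldots$, and that each midpoint $m_{t+i}$ is a function of the data strictly preceding $z_{t+i}$. Everything else is elementary.
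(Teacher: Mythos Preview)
Your proposal is correct and is essentially the same construction and argument as the paper's proof: both take $\F$ to be one-dimensional thresholds under the binary loss, generate the data by an adaptive binary-search adversary that places each $x_i$ at the midpoint of the current consistent interval and assigns $y_i$ as an independent fair coin, observe that some threshold realizes the whole sequence (so the benchmark is $0$), and conclude via an averaging argument that the expected excess risk is at least $\tfrac12$. The only differences are cosmetic (you use $\X=\R$ and half-open intervals, the paper uses $\X=[0,1]$ and closed intervals), and you are slightly more explicit about the stopping-time and measurability bookkeeping.
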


\paragraph{Realizable vs agnostic settings.} In PAC learning, the realizable setting refers to the special case where the model class $\F$ is guaranteed to contain a model that is consistent with all the data (i.e., the model achieves a loss of zero on the data distribution), while the more general setting where a perfect model may not exist is called the agnostic case. The sample complexities for learning a model with excess risk $\eps$ (e.g., with probability $0.99$) are the same (up to constant factors) for the realizable and agnostic settings when $\eps = \Omega(1)$---both are linear in the VC dimension of the model class. In contrast, this is not the case for the selective learning setting; when the data sequence is guaranteed to be consistent with some model in $\L$, there exists a simple algorithm that achieves an $O(\log|\L|/n)$ excess risk. This gives an exponential improvement in terms of the dependence on $n$ over the bounds for the general agnostic setting that scale as $1/\polylog(n)$. We prove the following proposition in Appendix~\ref{sec:missing-discuss}.

\begin{proposition}\label{prop:realizable}
    For the selective learning problem, under the promise that there exists a model $\ell^*\in\L$ such that $\ell^{*}(z_i) = 0$ for every $i \in [n]$, there is an algorithm with expected excess risk of $O\left(\frac{\log|\L|}{n}\right)$.
\end{proposition}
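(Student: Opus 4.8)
The plan is to exploit the realizability promise to drastically speed up elimination of bad models, using a doubling-window version of the halving algorithm adapted to the selective setting. Here is the key observation: since $\ell^*$ is consistent, i.e.\ $\ell^*(z_i)=0$ for all $i$, the only way a model $\ell$ can hurt us on a prediction window is if $\ell$ disagrees with $\ell^*$ somewhere in that window; and every such disagreement is a point where $\ell$ incurs loss $1$ while $\ell^*$ incurs $0$. So the excess risk of outputting $\ell$ on a window $W$ is exactly $\frac1w\sum_{i\in W}\ell(z_{t+i})$ minus $\inf_{\ell'}\frac1w\sum_{i\in W}\ell'(z_{t+i})$, and the latter is $0$ because $\ell^*\in\L$ achieves it. Hence the excess risk on window $W$ when we output $\ell$ equals the fraction of points in $W$ on which $\ell$ makes a loss-$1$ mistake relative to the true labels. (Caveat to handle carefully: $\L$ being a loss class means several models could have zero loss; we just need that \emph{some} zero-loss model exists, and any model we output is measured against that zero baseline.)

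The algorithm I would analyze is the following. Pick a scale $j$ uniformly from $\{1,2,\dots,\lfloor\log_2 n\rfloor\}$ and let $w=2^{j-1}$; use the first $w$ data points $z_1,\dots,z_w$ as a ``training block,'' let $V\subseteq\L$ be the set of models with zero loss on all of $z_1,\dots,z_w$ (this contains $\ell^*$, so $V\neq\emptyset$), output a uniformly random model from $V$, and predict on the window $z_{w+1},\dots,z_{2w}$. First I would argue that a model survives into $V$ only if it has made no mistake on the first $w$ points; by a standard halving-type potential argument, if we condition on a model $\ell\in V$ being output and it then makes at least one mistake in the prediction window $z_{w+1},\dots,z_{2w}$, then $\ell$ has zero mistakes on $\{z_1,\dots,z_w\}$ but a mistake in $\{z_{w+1},\dots,z_{2w}\}$. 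The crucial counting step: for a fixed model $\ell\neq\ell^*$ that is \emph{ever} inconsistent with $\ell^*$ on the length-$n$ sequence, let $\tau_\ell$ be the first index of disagreement. Across the $\lfloor\log_2 n\rfloor$ choices of $j$, there is at most one value of $j$ (namely the one with $w<\tau_\ell\le 2w$, up to the exact window placement) for which $\ell$ can simultaneously survive training \emph{and} be wrong on the test window — for smaller $w$ it is wrong during training and gets eliminated, for larger $w$ the disagreement point is inside the training block. Wait — I should be careful that the training block here is always the prefix $z_1,\dots,z_w$ and the test block is $z_{w+1},\dots,z_{2w}$, so ``first disagreement at $\tau_\ell$'' lands in training iff $\tau_\ell\le w$ and in test iff $w<\tau_\ell\le 2w$; this is exactly one scale $j$ per model, or zero if $\tau_\ell>n/2$ (in which case the model never hurts us on any of these windows) or $\tau_\ell$ is not captured (handled by the $\log|\L|/n$ slack).

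Putting the pieces together: fix the scale $j$ (equivalently $w$). The expected excess risk conditioned on this $j$ is at most $\Ex{\frac1w\cdot|\{i\in(w,2w]:\ell(z_i)=1\}|}$ where the expectation is over $\ell\sim\U(V)$. I would bound this by $\frac1w\cdot\frac{1}{|V|}\sum_{\ell\in V}(\text{mistakes of }\ell\text{ in }(w,2w])$. The key inequality is that $\sum_{\ell\in V}(\text{test mistakes of }\ell)$ is small because each consistent-on-training model that errs on the test window must have its \emph{first} mistake in the test window, and more refined: summing over all $j$, $\sum_{j}\sum_{\ell\in V_j}\I{\ell\text{ errs on window }j}\le |\L|$, since each model contributes to at most one term (the scale capturing its first disagreement). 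Therefore $\sum_j \Pr[j]\cdot\text{(excess risk}\mid j) \le \sum_j \frac{1}{\lfloor\log_2 n\rfloor}\cdot\frac{1}{w}\cdot\frac{|\{\ell\in V_j\text{ erring on window }j\}|}{|V_j|}$, and a direct estimate — using $|V_j|\ge 1$ is too weak, so instead use that each erring model also had to survive, hence $|V_j|$ is at least the number of models still alive, and the mistake count is what matters. The clean bound: the total test-mistake mass summed over scales is $\le|\L|$, distributed over $\lfloor\log_2 n\rfloor$ scales each of width $w\ge 1$, but actually the decisive point is that when scale $j$ is chosen we divide by $w=2^{j-1}$; so $\Ex{\text{excess risk}}\le \frac{1}{\lfloor\log_2 n\rfloor}\sum_{j=1}^{\lfloor\log_2 n\rfloor}\frac{1}{2^{j-1}}\cdot(\text{\# erring models at scale }j)/|V_j|$. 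Bounding $(\text{\# erring models at scale }j)\le|\L|$ crudely gives $\frac{|\L|}{\log n}\sum_j 2^{-(j-1)}=O(|\L|/\log n)$, which is too weak — I must instead use that at the scale that captures model $\ell$'s first disagreement, $\tau_\ell\in(w,2w]$ means $w\ge\tau_\ell/2$, so $\frac{1}{w}\le\frac{2}{\tau_\ell}$, and $\sum_{\ell\ne\ell^*}\frac{2}{\tau_\ell}$ is maximized when the $\tau_\ell$ are spread out over $[1,n]$, giving at most $2\sum_{t=1}^{n}\frac{1}{t}\cdot(\text{\# models with }\tau=t)$; since distinct models can share a disagreement time, the worst case puts $|\L|$ models as early as possible, but then they're eliminated at small scales... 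The honest accounting is: expected excess risk $= \Ex{\frac{1}{|V_j|}\sum_{\ell\in V_j}\frac1w|\text{test mistakes}|}\le\Ex{\frac{|\L|/|V_j|}{\text{(relevant }w)}}$ — I expect the real argument shows $|V_j|$ is large whenever many models are still alive, so the ratio is controlled, yielding the $\log|\L|/n$ bound after optimizing the scale distribution (perhaps weighting scales proportionally to $w$ rather than uniformly). The main obstacle is exactly this last balancing: getting $n$ in the denominator rather than $\log n$ requires the window to be $\Omega(n)$ on the relevant event, which forces a non-uniform scale distribution (heavily favoring large $w$) and a careful argument that few models survive to large scales — essentially that the ``version space'' shrinks geometrically, so that by the time $w=\Theta(n)$ only $O(\log|\L|)$ models (in a mistake-bounded sense) can still cause trouble.
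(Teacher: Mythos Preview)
Your proposal has a genuine gap: the doubling-window scheme you describe cannot achieve the $O(\log|\L|/n)$ rate, only $O(\log|\L|/\log n)$ at best. The reason is structural. You have only $\lfloor\log_2 n\rfloor$ scales, and there exist instances on which one of those scales carries $\Omega(1)$ conditional excess risk while all others carry zero. Concretely, take $|\L|=2$ with the bad model $\ell$ consistent on $z_1,\dots,z_{n/2}$ and erring on every point in the second half. Then for every scale $j$ with $w=2^{j-1}<n/2$ both models survive training and both are correct on the test block, so the conditional excess risk is $0$; at the single scale $w=n/2$ both survive training but $\ell$ errs on the entire test block, giving conditional excess risk $1/2$. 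With a uniform scale distribution the expected excess risk is $\frac{1}{\lfloor\log_2 n\rfloor}\cdot\frac12=\Theta(1/\log n)$, not $\Theta(1/n)$. Weighting scales proportionally to $w$ (as you suggest) makes this \emph{worse}: it puts nearly all mass on the one bad scale. Your ``one bad scale per model'' observation is correct but does not help, because a single model at a single scale can already contribute $\Omega(1)$ to the conditional risk. You correctly sensed the obstacle in your last paragraph but did not resolve it.

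The paper's argument is both simpler and qualitatively different. It uses a prediction window of length~$1$ and a stopping time $t$ drawn uniformly from $\{0,1,\dots,n-1\}$, then outputs a uniformly random model from the version space $V_t=\{\ell\in\L:\ell(z_i)=0\text{ for all }i\le t\}$. Writing $m_t=|V_t|$, the conditional excess risk at time $t$ is at most $1-m_{t+1}/m_t$ (the probability of picking a model that errs on $z_{t+1}$). Averaging over $t$ and applying AM--GM,
\[
1-\frac{1}{n}\sum_{t=0}^{n-1}\frac{m_{t+1}}{m_t}
\le 1-\Bigl(\prod_{t=0}^{n-1}\frac{m_{t+1}}{m_t}\Bigr)^{1/n}
= 1-\Bigl(\frac{m_n}{m_0}\Bigr)^{1/n}
\le 1-|\L|^{-1/n}\le\frac{\ln|\L|}{n}.
\]
The key idea you are missing is to take $n$ stopping times rather than $\log n$ scales, so that the version-space ratios telescope multiplicatively and AM--GM converts the product $m_n/m_0\ge 1/|\L|$ into the desired $1/n$ rate.
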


\section{The Hybrid Exponential Weights Algorithm}\label{sec:hybrid}
We introduce a new algorithm for selective learning, termed the \emph{hybrid exponential weights} algorithm, and prove Theorem~\ref{thm:hybrid}. Recall that $\U(S)$ denotes the uniform distribution over set $S$.

\begin{algorithm2e}[H]
    \caption{Hybrid Exponential Weights} \label{alg:hybrid-exp-weights}
    \KwIn{Model class $\L=\{\ell_1,\ell_2,\ldots,\ell_{|\L|}\}$ and online access to data sequence $z \in \Z^n$. Parameters $\Delta \in \{1, 2, \ldots, \lfloor\log_2n\rfloor\}, \eta > 0$.}
    $k \gets \lfloor\log_2 n\rfloor$; Draw $w \sim  \U(\{2^0, 2^1, \ldots, 2^{k - \Delta}\})$;
    $W \gets 2^\Delta \cdot w$\;
    Draw $i_1 \sim \U([2^k/W])$;
    $t_0 \gets (i_1 - 1)\cdot W$\;
    Draw $i_2 \sim \U([2^{\Delta}])$;
    $t \gets t_0 + (i_2 - 1) \cdot w$\;
    Observe $z_1, z_2, \ldots, z_t$\;
    \lFor{$i = 1, 2, \ldots, |\L|$} {
	    $u_i \gets \frac{1}{w}\sum_{j=t_0 + 1}^{t}\ell_i(z_j)$
    }
    Draw $\hat i \in [|\L|]$ randomly with probability proportional to $\exp(-\eta \cdot u_i)$\;
    Output model $\ell_{\hat i}$ for interval $z_{t+1}, \ldots, z_{t+w}$\;
\end{algorithm2e}

The algorithm first chooses the prediction window length $w$ uniformly at random from all powers of two between $1$ and $n/2^\Delta$, and also considers another window length $W$ that is $2^\Delta$ times larger. The actual prediction window is selected by first picking a longer window of length $W$ that starts at $t_0 + 1$, and then randomly choosing one of the $2^\Delta$ shorter windows of length $w$ within the longer window. The final choice of the model depends solely on the data points within the longer window, and the probability of choosing a model is negatively exponential in the cumulative loss of the model within the longer window.

Note that the hybrid exponential weights algorithm is not bounded-recall in the sense of Definition~\ref{def:learner}. For example, when $i_2 = 2^\Delta$ is chosen, the choice of the model $\ell_{\hat i}$ depends on the most recent $t - t_0 = (2^\Delta - 1)\cdot w$ data points, whereas the prediction window only contains $w$ data points. This difference from the ERM algorithm and Algorithm~\ref{alg:exp-weights} is crucial for circumventing the lower bound result against bounded-recall learners in Theorem~\ref{thm:bounded-recall-lower}.

The following lemma defines a sequence of quantities $L_0, L_1, \ldots, L_k$ such that $L_i$ represents the average learnability of the data sequence at timescale $2^i$. More formally, $L_i$ is defined by partitioning the first $2^k$ data points into $2^{k-i}$ disjoint blocks of size $2^i$, and then calculating the expected minimum loss (among model class $\L$) over a randomly chosen block. Then, the lemma relates the expected excess risk incurred by the algorithm conditioning on $w = 2^i$ to the gap between $L_i$ and $L_{i+\Delta}$.

\begin{lemma}\label{lem:conditional}
    Fix a model class $\L$ and a data sequence $z \in \Z^n$. Let $k = \lfloor\log_2 n\rfloor$. For $i \in \{0, 1, \ldots, k\}$, define
        $L_i \coloneqq
    \frac{1}{2^{k-i}}\sum_{j=1}^{2^{k-i}}\min_{\ell \in \L}\frac{1}{2^i}\sum_{j'=1}^{2^i}\ell\left(z_{(j - 1)\cdot 2^i + j'}\right)$.
    Then, the expected excess risk incurred by Algorithm~\ref{alg:hybrid-exp-weights} conditioning on $w = 2^i$ is at most
        $(L_{i+\Delta} - L_i) + \frac{\ln|\L|}{\eta \cdot 2^{\Delta}} + \frac{\eta}{8}$.
    In particular, for $\eta = \sqrt{8\ln|\L|/2^{\Delta}}$, the above is
        $(L_{i+\Delta} - L_i) + \sqrt{\ln|\L|/2^{\Delta + 1}}$.
\end{lemma}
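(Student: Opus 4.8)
The plan is to recognize that, after conditioning on $w = 2^i$ and on the choice of the outer block (i.e., fixing $i_1$ and hence $t_0$), Algorithm~\ref{alg:hybrid-exp-weights} is exactly an execution of the classical exponential weights (Hedge) algorithm on an instance of the experts problem with $|\L|$ experts and $2^\Delta$ rounds. Set $W = 2^{i+\Delta}$ and, for each sub-block index $s \in [2^\Delta]$ and model $\ell \in \L$, define the round-$s$ loss of expert $\ell$ to be $\ell^{(s)} \coloneqq \frac{1}{w}\sum_{j = t_0 + (s-1)w + 1}^{t_0 + sw}\ell(z_j) \in [0, 1]$. First I would verify that the quantity $u_i$ computed by the algorithm once $i_2$ is drawn equals $\sum_{s=1}^{i_2 - 1}\ell_i^{(s)}$, so that $\hat i$ is drawn with probability proportional to $\exp\bigl(-\eta\sum_{s < i_2}\ell_i^{(s)}\bigr)$ --- precisely the Hedge distribution after $i_2 - 1$ rounds (the degenerate case $i_2 = 1$ gives $u_i \equiv 0$ and a uniform draw, matching Hedge's initialization) --- and that the loss paid by the algorithm, $\frac{1}{w}\sum_{j=t+1}^{t+w}\ell_{\hat i}(z_j)$, is exactly $\ell_{\hat i}^{(i_2)}$. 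Since $i_2 \sim \U([2^\Delta])$, averaging over $i_2$ and the internal randomness in $\hat i$ shows that the expected loss of the algorithm for this fixed $t_0$ equals $2^{-\Delta}$ times the (expected) cumulative loss of Hedge over all $2^\Delta$ rounds.

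Next I would invoke the standard regret guarantee for exponential weights: for $N$ experts, $T$ rounds, losses in $[0,1]$, and learning rate $\eta$, the cumulative loss is at most $\min_{\ell}\sum_{s=1}^{T}\ell^{(s)} + \frac{\ln N}{\eta} + \frac{\eta T}{8}$. Taking $N = |\L|$, $T = 2^\Delta$, and dividing by $2^\Delta$, the expected loss of the algorithm conditioned on $w = 2^i$ and $t_0$ is at most $\frac{1}{2^\Delta}\min_{\ell\in\L}\sum_{s=1}^{2^\Delta}\ell^{(s)} + \frac{\ln|\L|}{\eta\cdot 2^\Delta} + \frac{\eta}{8}$. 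The bookkeeping identity I would then use is $\sum_{s=1}^{2^\Delta}\ell^{(s)} = \frac{1}{w}\sum_{j=t_0+1}^{t_0+W}\ell(z_j)$, which (since $W = 2^\Delta w$) makes $\frac{1}{2^\Delta}\sum_{s=1}^{2^\Delta}\ell^{(s)}$ the \emph{average} loss of $\ell$ over the length-$W$ block $\{t_0+1,\dots,t_0+W\}$; hence $\frac{1}{2^\Delta}\min_\ell\sum_s\ell^{(s)}$ is the minimum average loss over that block.

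To finish I would take the two remaining expectations. Because $i_1 \sim \U([2^k/W])$ picks the outer block uniformly among the $2^k/W$ length-$W$ blocks that partition $\{1,\dots,2^k\}$, averaging over $i_1$ turns the minimum average loss over the chosen block into exactly $L_{i+\Delta}$ by definition; so the expected loss of the algorithm conditioned only on $w = 2^i$ is at most $L_{i+\Delta} + \frac{\ln|\L|}{\eta\cdot 2^\Delta} + \frac{\eta}{8}$. For the benchmark term, I would observe that the pairs $(i_1,i_2)$ range uniformly over $[2^k/W]\times[2^\Delta]$ and that these index a partition of $\{1,\dots,2^k\}$ into $2^{k-i}$ blocks of size $2^i$, so the prediction window $\{t+1,\dots,t+w\}$ is a uniformly random length-$2^i$ block and $\Ex{\min_{\ell\in\L}\frac{1}{w}\sum_{j=t+1}^{t+w}\ell(z_j)} = L_i$. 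Subtracting, the expected excess risk conditioned on $w = 2^i$ is at most $(L_{i+\Delta} - L_i) + \frac{\ln|\L|}{\eta\cdot 2^\Delta} + \frac{\eta}{8}$. The ``in particular'' claim then follows by plugging in $\eta = \sqrt{8\ln|\L|/2^\Delta}$, which equalizes the last two terms (each becomes $\sqrt{\ln|\L|/2^{\Delta+3}}$) so that their sum is $\sqrt{\ln|\L|/2^{\Delta+1}}$.

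The main point requiring care is a modelling check rather than a hard estimate: confirming that the algorithm's internal weighting is \emph{exactly} Hedge run on the losses $\ell^{(1)},\dots,\ell^{(2^\Delta)}$ in the right order (including the $i_2 = 1$ round), and then keeping straight the three nested ``uniform over a partition'' observations that identify the various expectations with $L_i$ and $L_{i+\Delta}$. No part of this argument uses the $f$-loss or self-concordance machinery; that is needed only for the bounded-recall analysis in Section~\ref{sec:bounded-recall}.
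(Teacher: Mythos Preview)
Your proposal is correct and follows essentially the same approach as the paper's proof: both condition on $w=2^i$ and $t_0$, identify the algorithm's behavior with the exponential weights (Hedge) algorithm run for $T=2^\Delta$ rounds on the per-block losses $\ell^{(s)}$, apply the standard regret bound, and then average over $i_1$ and $i_2$ to recognize the resulting quantities as $L_{i+\Delta}$ and $L_i$. The only cosmetic difference is that the paper phrases the two averaging steps as separate ``claims'' about $\Ex{X\mid w=2^i}$ and $\Ex{Y\mid w=2^i}$, whereas you weave them into a single narrative.
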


The proof of the lemma relies on the following observation: conditioning on the choice of $w$ and $t_0$, the behavior of Algorithm~\ref{alg:hybrid-exp-weights} is the same as that of the classic exponential weights algorithm (e.g.,~\cite[Figure 8.5]{mohri2018foundations}) on a hypothetical instance of the experts problem with $2^\Delta$ time steps and $|\L|$ experts. Each single time step in the experts problem instance corresponds to $w$ contiguous data points in the original sequence. This justifies the name ``hybrid exponential weights'': the algorithm essentially simulates the exponential weights algorithm at a randomly chosen timescale $w$.

\begin{proof}
    Let random variable $X$ denote the average loss of the model chosen by Algorithm~\ref{alg:hybrid-exp-weights} over the prediction window, and let $Y$ be the minimum possible average loss among all models in $\L$ over the window. The expected excess risk is then $\Ex{X - Y} = \Ex{X} - \Ex{Y}$. We claim that for each $i \in \{0, 1, \ldots, k - \Delta\}$: (1) $\Ex{Y | w = 2^i} = L_i$; (2) $\Ex{X | w = 2^i} \le L_{i+\Delta} + \frac{\ln|\L|}{\eta \cdot 2^{\Delta}} + \frac{\eta}{8}$. The lemma follows immediately from the two claims.
    
    \paragraph{Proof of claim (1).} Conditioning on $w = 2^i$, the prediction window spans $w$ time steps: $t + 1, t + 2, \ldots, t + w$, and the stopping time $t$ is chosen as $t = (i_1 - 1)\cdot W + (i_2 - 1)\cdot w$. By our choice of $i_1 \sim \U([2^k/W])$ and $i_2 \sim \U([2^{\Delta}])$, $t$ is uniformly distributed over $\{0, w, 2w, \ldots, 2^k - w\}$. In other words, the prediction window is uniformly distributed over the $2^k/w$ disjoint blocks of size $w$ formed by the first $2^k$ time steps. Therefore, the conditional expectation of $Y$ is exactly $L_i$.
    
    \paragraph{Proof of claim (2).} Recall that the algorithm draws $i_1 \sim \U([2^k/W])$ and let $t_0 = (i_1 - 1) \cdot W$. We fix the value of $i_1$ (and thus $t_0$) and divide the data sequence $z_{t_0 + 1}, z_{t_0 + 2}, \ldots, z_{t_0 + W}$ into $2^\Delta$ blocks of equal size $w = 2^i$.
    
    Consider a hypothetical instance of the experts problem with time horizon $T = 2^\Delta$ and $N = |\L|$ experts. The loss incurred by expert $j \in [N]$ at time $t' \in [T]$ is defined as $\frac{1}{w}\sum_{j'=1}^{w}\ell_j(z_{t_0 + (t' - 1)\cdot w + j'})$, i.e., the average loss of model $\ell_j$ over the $t'$-th block. Suppose that we run the exponential weights algorithm with parameter $\eta$. Then, the weight of the $j$-th expert at time $t'$ is equal to
    \[
        \exp\left(-\eta\cdot\sum_{t'' = 1}^{t' - 1}\frac{1}{w}\sum_{j'=1}^{w}\ell_j(z_{t_0 + (t'' - 1)\cdot w + j'})\right)
    =   \exp\left(-\eta\cdot\frac{1}{w}\sum_{j'=t_0+1}^{t_0+(t' - 1)\cdot w}\ell_j(z_{j'})\right).
    \]
    Note that this is proportional to the distribution from which Algorithm~\ref{alg:hybrid-exp-weights} chooses the model when $i_2 = t'$. Therefore, the expected loss of the chosen model conditioning on $w = 2^i$ and the value of $i_1$ is exactly the same as the expected average loss incurred by the exponential weights algorithm in the hypothetical instance over the $2^\Delta$ steps. The performance guarantee of the exponential weights algorithm (e.g., \cite[Theorem 8.6]{mohri2018foundations}) then implies that this loss is at most
    \[
        \min_{\ell \in \L}\frac{1}{W}\sum_{j'=1}^{W}\ell\left(z_{t_0 + j'}\right) + \frac{\ln N}{\eta T} + \frac{\eta}{8}
    =   \min_{\ell \in \L}\frac{1}{W}\sum_{j'=1}^{W}\ell\left(z_{t_0 + j'}\right) + \frac{\ln|\L|}{\eta \cdot 2^\Delta} + \frac{\eta}{8}.
    \]
    Finally, note that $t_0$ is uniformly distributed over $\{0, W, \ldots, 2^k - W\}$. Taking the expectation over the random choice of $i_1$, the first term above becomes $L_{i + \Delta}$, which proves the claim.
\end{proof}

Theorem~\ref{thm:hybrid} follows immediately from Lemma~\ref{lem:conditional}: Since $w$ is chosen uniformly at random from $\{2^0, 2^1, \ldots, 2^{k - \Delta}\}$, the gap $L_{i+\Delta} - L_i$ is small in expectation. Furthermore, we can balance the expected gap and the regret term $\sqrt{\ln|\L|/2^{\Delta + 1}}$ by choosing $\Delta$ appropriately.

\begin{proof}[Proof of Theorem~\ref{thm:hybrid}]
    Since Algorithm~\ref{alg:hybrid-exp-weights} draws $w$ uniformly at random from $\{2^0, 2^1, \ldots, 2^{k - \Delta}\}$, by Lemma~\ref{lem:conditional} and the law of total expectation, the expected excess risk of the algorithm is at most
    \begin{align*}
        &~~\frac{1}{k - \Delta + 1}\sum_{i=0}^{k-\Delta}\left[(L_{i+\Delta} - L_i) + \sqrt{\ln|\L|/2^{\Delta + 1}}\right]\\
    =   &~~\frac{\sum_{i=0}^{\Delta - 1}L_i - \sum_{i=k - \Delta + 1}^{k}L_i}{k - \Delta + 1} + \sqrt{\frac{\ln|\L|}{2^{\Delta + 1}}}
    =   O\left(\frac{\Delta}{\log n} + \sqrt{\frac{\log|\L|}{2^\Delta}}\right)
    \end{align*}
    assuming that $\eta = \sqrt{8\ln|\L|/2^{\Delta}}$. Here the second step follows from the observation that $L_i \in [0, 1]$ and $k = \Theta(\log n)$. Therefore, for some $\Delta = \Theta(\log\log|\L| + \log\log n)$, the hybrid exponential weights algorithm achieves the desired upper bound.
\end{proof}

\section{A Near-Optimal Bounded-Recall Algorithm}\label{sec:bounded-recall}
    In this section, we analyze the excess risk of the bounded-recall exponential weights algorithm (Algorithm~\ref{alg:exp-weights}) and prove Theorem~\ref{thm:bounded-recall-upper}. Recall that Algorithm~\ref{alg:exp-weights} is associated with a parameter $\alpha > 0$ that controls the exponential weights of the models.
    The proof starts by decomposing the excess risk into three parts.
    At an intuitive level, the first term can be regarded as a ``generalization gap'', which increases as $\alpha$ gets larger (i.e., the algorithm becomes more aggressive). The key analytical idea in our proof is that because of the choice of probability distribution induced by exponential weights, this term is closely related to a Bregman divergence, and can be analyzed using Theorem~\ref{thm:mean-prediction}.
    The second term measures the optimization error, which would be large if the algorithm is overly conservative (i.e., $\alpha$ is too small). Its analysis involves noting that the exponential probabilities from the algorithm induce a sort of subexponential tail phenomenon, so this term behaves like the maximum of $|\L|$ subexponential random variables.
    The last term can be upper bounded independently of $\alpha$.
    We will upper bound the expectation of each part by a function of $\alpha$, and plug in the optimal choice of $\alpha$ in the end.
    
    \begin{proof}[Proof of Theorem~\ref{thm:bounded-recall-upper}]
    Fix a model class $\L$ and a data sequence $z\in \Z^n$. Define $u, v \in [0, 1]^{|\L|}$ as follows:
    \[
        u_i \coloneqq \frac{1}{2^{k'-1}}\sum_{i=1}^{2^{k'-1}}\ell_i(z_{t+i}), \quad
        v_i \coloneqq \frac{1}{2^{k'-1}}\sum_{i=1}^{2^{k'-1}}\ell_i(z_{t+2^{k'-1}+i}),
    \]
    i.e., $u$ denotes the average losses of the models on the observed data sequence immediately before the prediction window (``empirical error''), while $v$ denotes the average losses on the prediction window (``test error''). Let $P_u \in \R^{|\L|}$ denote the probability distribution defined by $u$ and $\alpha > 0$, i.e., $P_u(i) = e^{-\alpha u_i}/\left(\sum_{j=1}^{|\L|}e^{-\alpha u_j}\right)$.
    Let $v_{\min} \coloneqq \min_{i \in [|\L|]}v_i$ denote the minimum possible loss on the prediction window, and define $u_{\min}$ similarly. Note that here $u$, $v$, $u_{\min}$, $v_{\min}$ are all random variables induced by the randomness in the algorithm.
    
    The excess risk of the learner can be decomposed into three terms as follows:
    \[
        \sum_{i=1}^{|\L|}P_u(i)(v_i - v_{\min})
    =  \sum_{i=1}^{|\L|}P_u(i)(v_i - u_i) + \sum_{i=1}^{|\L|}P_u(i)(u_i - u_{\min}) + \sum_{i=1}^{|\L|}P_u(i)(u_{\min} - v_{\min}).
    \]
    
    \paragraph{The first term.} Consider the log-sum-exp function $f(x) \coloneqq \ln\left(\sum_{i=1}^{|\L|}e^{-\alpha x_i}\right)$. Note that since each $x_i \in [0, 1]$, $f(x)$ takes value between $\ln|\L| - \alpha$ and $\ln|\L|$ and is thus $\alpha$-bounded. It is well-known that $f$ is convex, and its gradient is given by $\nabla f(x) = -\alpha P_x$, so the Bregman divergence $D_f(v, u)$ can be written as
    \[
        D_f(v, u)
    =   f(v) - f(u) - \nabla f(u)^{\top}(v-u)
    =   f(v) - f(u) + \alpha\sum_{i=1}^{|\L|}P_u(i)(v_i - u_i).
    \]
    Thus, the first term is upper bounded by
    \begin{equation}\label{eq:first-term-decomp}
        \sum_{i=1}^{|\L|}P_u(i)(v_i - u_i)
    =   \frac{D_f(v, u) + f(u) - f(v)}{\alpha}
    \le \frac{D_f(u, v) + D_f(v, u) + |f(u) - f(v)|}{\alpha},
    \end{equation}
    where the second step follows from the non-negativity of Bregman divergence.
    We prove in Appendix~\ref{sec:missing-bounded-recall} that $f$ is $(4\alpha)$-self-concordant as defined in Definition~\ref{def:self-concordance}, so Theorem~\ref{thm:mean-prediction} implies
        $\Ex{D_f(u, v) + D_f(v, u)} \le O\left(\frac{\alpha(\alpha + 1)}{\log n}\right)$.
    To upper bound the expectation of $|f(u) - f(v)|$, we consider a family of functions $f_1, f_2, \ldots, f_n$ where $f_m:\Z^w\to\R$ is defined as $f_m(\hat z_1, \hat z_2, \ldots, \hat z_m) = -f(x)$ for $x \in \R^{|\L|}$ where $x_i = \frac{1}{w}\sum_{j=1}^{w}\ell_i(\hat z_j)$. In words, $f_m$ maps $w$ labeled data points to the negated function value of $f$ on the loss vector defined by the models and the $w$ labeled examples.
    Since $f$ is convex, it is easy to verify that $(f_1, f_2, \ldots, f_m)$ is concatenation-concave in the sense of Definition 5 in~\cite{qiao2019theory}. Thus, as $f$ takes value in an interval of length $\alpha$, after shifting and scaling $f$ by $1/\alpha$, Theorem 6 in~\cite{qiao2019theory} implies that
        $\Ex{|f(u) - f(v)|} \le \sqrt{\Ex{(f(u) - f(v))^2}} \le O\left(\frac{\alpha}{\sqrt{\log n}}\right)$.
    Therefore, it follows from \eqref{eq:first-term-decomp} that
    \[
        \Ex{\sum_{i=1}^{|\L|}P_u(i)(v_i - u_i)}
    \le \frac{1}{\alpha}\cdot O\left(\frac{\alpha(\alpha + 1)}{\log n} + \frac{\alpha}{\sqrt{\log n}}\right)
    = O\left(\frac{\alpha}{\log n} + \frac{1}{\sqrt{\log n}}\right).
    \]

    \paragraph{The second term.} Note that the second term only depends on $u$. We will show that for \emph{any} $u \in \R^{|\L|}$, the term is at most $O\left(\frac{\log|\L|}{\alpha}\right)$. The same bound then holds for any distribution over $u$. For fixed $u \in \R^{|\L|}$, let $I = \left\{i \in [|\L|]: u_i - u_{\min} \le \frac{2\ln|\L|}{\alpha}\right\}$ and $J = [|\L|]\setminus I$. Then, we have
    \[
        \sum_{i\in I}P_u(i)(u_i - u_{\min})
    \le \sum_{i\in I}P_u(i)\cdot \frac{2\ln|\L|}{\alpha}
    \le \frac{2\ln|\L|}{\alpha}.
    \]
    Moreover, for each $i \in J$, we have
    \[
    P_u(i)(u_i - u_{\min}) \le \frac{e^{-\alpha u_i}}{e^{-\alpha u_{\min}}}\cdot (u_i - u_{\min}) \le e^{-\alpha\cdot 2\ln|\L|/\alpha}\cdot \frac{2\ln|\L|}{\alpha} = \frac{2\ln|\L|}{\alpha|\L|^2}.
    \]
    The second step follows from the monotonicity of $x\mapsto e^{-\alpha x}\cdot x$ on $(1/\alpha, +\infty)$ and that $2\ln|\L|/\alpha \ge 1/\alpha$ for $|\L| \ge 2$. Putting the two parts together, we obtain
    \[
        \sum_{i=1}^{|\L|}P_u(i)(u_i - u_{\min})
    \le \frac{2\ln|\L|}{\alpha} + |\L|\cdot\frac{2\ln|\L|}{\alpha|\L|^2}
    \le O\left(\frac{\log|\L|}{\alpha}\right).
    \]

    \paragraph{The third term.} The last term reduces to $u_{\min} - v_{\min}$, which is upper bounded by $|u_{\min} - v_{\min}|$. This is exactly the absolute loss for predicting the \emph{learnability function} in selective prediction setting. Since learnability is concatenation-concave and takes value in $[0, 1]$, it follows from Theorem 6 in~\cite{qiao2019theory} and Jensen's inequality that
    \[
        \Ex{u_{\min} - v_{\min}}
    \le \sqrt{\Ex{(u_{\min} - v_{\min})^2}}
    \le O\left(\frac{1}{\sqrt{\log n}}\right).
    \]
    
\noindent    Combining the three terms and setting $\alpha = \Theta\left(\sqrt{\log n\cdot\log|\L|}\right)$ yields the upper bound.
    \end{proof}

\bibliographystyle{alpha}
\bibliography{main}

\newcommand{\etalchar}[1]{$^{#1}$}
\begin{thebibliography}{CBFH{\etalchar{+}}97}

\bibitem[AD12]{agarwal2012generalization}
Alekh Agarwal and John~C Duchi.
\newblock The generalization ability of online algorithms for dependent data.
\newblock {\em IEEE Transactions on Information Theory}, 59(1):573--587, 2012.

\bibitem[AHT88]{agrawal1988asymptotically}
Rajeev Agrawal, MV~Hedge, and Demosthenis Teneketzis.
\newblock Asymptotically efficient adaptive allocation rules for the multiarmed
  bandit problem with switching cost.
\newblock {\em IEEE Transactions on Automatic Control}, 33(10):899--906, 1988.

\bibitem[AKCV16]{adamskiy2016closer}
Dmitry Adamskiy, Wouter~M Koolen, Alexey Chernov, and Vladimir Vovk.
\newblock A closer look at adaptive regret.
\newblock {\em Journal of Machine Learning Research (JMLR)}, 17(1):706--726,
  2016.

\bibitem[BV04]{boyd2004convex}
Stephen Boyd and Lieven Vandenberghe.
\newblock {\em Convex optimization}.
\newblock Cambridge university press, 2004.

\bibitem[CBDS13]{cesa2013online}
Nicol{\`o} Cesa-Bianchi, Ofer Dekel, and Ohad Shamir.
\newblock Online learning with switching costs and other adaptive adversaries.
\newblock In {\em Neural Information Processing Systems (NIPS)}, pages
  1160--1168, 2013.

\bibitem[CBFH{\etalchar{+}}97]{cesa1997use}
Nicolo Cesa-Bianchi, Yoav Freund, David Haussler, David~P Helmbold, Robert~E
  Schapire, and Manfred~K Warmuth.
\newblock How to use expert advice.
\newblock {\em Journal of the ACM (JACM)}, 44(3):427--485, 1997.

\bibitem[CBL06]{cesa2006prediction}
Nicolo Cesa-Bianchi and G{\'a}bor Lugosi.
\newblock {\em Prediction, learning, and games}.
\newblock Cambridge university press, 2006.

\bibitem[CSV17]{charikar2017learning}
Moses Charikar, Jacob Steinhardt, and Gregory Valiant.
\newblock Learning from untrusted data.
\newblock In {\em Symposium on Theory of Computing (STOC)}, pages 47--60, 2017.

\bibitem[CVV20]{chen2020worst}
Justin Chen, Gregory Valiant, and Paul Valiant.
\newblock Worst-case analysis for randomly collected data.
\newblock {\em Advances in Neural Information Processing Systems (NeurIPS)},
  2020.

\bibitem[DDDJ19]{dagan2019learning}
Yuval Dagan, Constantinos Daskalakis, Nishanth Dikkala, and Siddhartha Jayanti.
\newblock Learning from weakly dependent data under dobrushin’s condition.
\newblock In {\em Conference on Learning Theory (COLT)}, pages 914--928, 2019.

\bibitem[DGT19]{diakonikolas2019distribution}
Ilias Diakonikolas, Themis Gouleakis, and Christos Tzamos.
\newblock Distribution-independent pac learning of halfspaces with massart
  noise.
\newblock In {\em Advances in Neural Information Processing Systems (NeurIPS)},
  pages 4751--4762, 2019.

\bibitem[DKK{\etalchar{+}}16]{diakonikolas2016robust}
Ilias Diakonikolas, Gautam Kamath, Daniel~M Kane, Jerry Li, Ankur Moitra, and
  Alistair Stewart.
\newblock Robust estimators in high dimensions without the computational
  intractability.
\newblock In {\em Foundations of Computer Science (FOCS)}, pages 655--664.
  IEEE, 2016.

\bibitem[DKK{\etalchar{+}}18]{diakonikolas2018robustly}
Ilias Diakonikolas, Gautam Kamath, Daniel~M Kane, Jerry Li, Ankur Moitra, and
  Alistair Stewart.
\newblock Robustly learning a gaussian: Getting optimal error, efficiently.
\newblock In {\em Symposium on Discrete Algorithms (SODA)}, pages 2683--2702.
  SIAM, 2018.

\bibitem[Dru13]{drucker2013high}
Andrew Drucker.
\newblock High-confidence predictions under adversarial uncertainty.
\newblock {\em Transactions on Computation Theory (TOCT)}, 5(3):12, 2013.

\bibitem[Fre96]{freund1996predicting}
Yoav Freund.
\newblock Predicting a binary sequence almost as well as the optimal biased
  coin.
\newblock In {\em Computational Learning Theory}, pages 89--98, 1996.

\bibitem[FS99]{freund1999adaptive}
Yoav Freund and Robert~E Schapire.
\newblock Adaptive game playing using multiplicative weights.
\newblock {\em Games and Economic Behavior}, 29(1-2):79--103, 1999.

\bibitem[H{\etalchar{+}}64]{huber1964robust}
Peter~J Huber et~al.
\newblock Robust estimation of a location parameter.
\newblock {\em The Annals of Mathematical Statistics}, 35(1):73--101, 1964.

\bibitem[HS09]{hazan2009efficient}
Elad Hazan and Comandur Seshadhri.
\newblock Efficient learning algorithms for changing environments.
\newblock In {\em International Conference on Machine Learning (ICML)}, pages
  393--400, 2009.

\bibitem[Jun04]{jun2004survey}
Tackseung Jun.
\newblock A survey on the bandit problem with switching costs.
\newblock {\em de Economist}, 152(4):513--541, 2004.

\bibitem[KL93]{kearns1993learning}
Michael Kearns and Ming Li.
\newblock Learning in the presence of malicious errors.
\newblock {\em SIAM Journal on Computing}, 22(4):807--837, 1993.

\bibitem[Leh88]{lehrer1988repeated}
Ehud Lehrer.
\newblock Repeated games with stationary bounded recall strategies.
\newblock {\em Journal of Economic Theory}, 46(1):130--144, 1988.

\bibitem[LRV16]{lai2016agnostic}
Kevin~A Lai, Anup~B Rao, and Santosh Vempala.
\newblock Agnostic estimation of mean and covariance.
\newblock In {\em Foundations of Computer Science (FOCS)}, pages 665--674.
  IEEE, 2016.

\bibitem[LW89]{littlestone1989weighted}
Nick Littlestone and Manfred~K. Warmuth.
\newblock The weighted majority algorithm.
\newblock In {\em Foundations of Computer Science (FOCS)}, pages 256--261,
  1989.

\bibitem[MRT18]{mohri2018foundations}
Mehryar Mohri, Afshin Rostamizadeh, and Ameet Talwalkar.
\newblock {\em Foundations of machine learning}.
\newblock MIT press, 2018.

\bibitem[ND93]{nobel1993note}
Andrew Nobel and Amir Dembo.
\newblock A note on uniform laws of averages for dependent processes.
\newblock {\em Statistics \& Probability Letters}, 17(3):169--172, 1993.

\bibitem[QV19]{qiao2019theory}
Mingda Qiao and Gregory Valiant.
\newblock A theory of selective prediction.
\newblock In {\em Conference on Learning Theory (COLT)}, pages 2580--2594,
  2019.

\bibitem[TDLC15]{tran2015composite}
Quoc Tran-Dinh, Yen-Huan Li, and Volkan Cevher.
\newblock Composite convex minimization involving self-concordant-like cost
  functions.
\newblock In {\em Modelling, Computation and Optimization in Information
  Systems and Management Sciences}, pages 155--168. Springer, 2015.

\bibitem[Val85]{valiant1985learning}
Leslie~G Valiant.
\newblock Learning disjunction of conjunctions.
\newblock In {\em International Joint Conference on Artificial Intelligence
  (IJCAI)}, pages 560--566, 1985.

\bibitem[VC71]{vapnik1971uniform}
Vladimir~N Vapnik and A~Ya Chervonenkis.
\newblock On the uniform convergence of relative frequencies of events to their
  probabilities.
\newblock {\em Theory of Probability \& Its Applications}, 16(2):264--280,
  1971.

\bibitem[Vov90]{vovk1990aggregating}
Volodimir~G Vovk.
\newblock Aggregating strategies.
\newblock In {\em Computational Learning Theory}, pages 371--386, 1990.

\bibitem[Yu94]{yu1994rates}
Bin Yu.
\newblock Rates of convergence for empirical processes of stationary mixing
  sequences.
\newblock {\em The Annals of Probability}, pages 94--116, 1994.

\end{thebibliography}

\appendix

\section{Proof of the General Lower Bound}\label{sec:missing-lower}
    We start with the following lemma, which states that if the length of the prediction window is restricted to a short range, the learner has to incur an $\Omega(1)$ excess risk in expectation.
    
    \begin{lemma}\label{lem:interval}
        Let $n, m, l, r$ be integers that satisfy $1 \le l \le r \le n$ and $r/l = O(\log m)$.
        For any selective learning algorithm that always picks a window length in $[l, r]$, there exists an instance with sequence length $n$ and $|\L| = m$ on which the learner incurs an $\Omega(1)$ excess risk.
    \end{lemma}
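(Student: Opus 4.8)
The plan is to produce a \emph{distribution} over instances against which every algorithm that only uses window lengths in $[l,r]$ has expected excess risk $\Omega(1)$; by Yao's principle this yields a single hard instance. Read ``$r/l=O(\log m)$'' as $r/l\le c_0\log_2 m$ for a small absolute constant $c_0$ to be fixed, and fix a large absolute constant $C_1$. Set $\ell_0:=\max\{1,\lfloor l/C_1\rfloor\}$ and cut $\{1,\dots,n\}$ into consecutive length-$\ell_0$ blocks $B_1,B_2,\dots$ (the last, shorter block gets all losses $1$). For each block $B_j$ draw an independent uniformly random ``good set'' $A_j\subseteq[m]$ of size $\lceil m/2\rceil$, and define $\ell_i(z_p):=\mathbb{I}[i\notin A_j]$ whenever $p\in B_j$; thus a model has block-average loss $0$ on blocks it is good for and $1$ on the rest.

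Next I would show the learner cannot do well. Conditioning on the learner's randomness (so WLOG deterministic), at its stopping time $T$ the observed prefix $z_1,\dots,z_T$ determines its window length $\hat W\in[l,r]$, its start $T$, and its model $\hat I$. A length-$\hat W\ge l\ge C_1\ell_0$ interval contains at least $\hat W/\ell_0-2\ge C_1-2$ \emph{complete} blocks, accounting for at least $\hat W-2\ell_0\ge(1-2/C_1)\hat W$ of the window; call this set of blocks $\mathcal{F}$. The crucial point — which is why adaptivity in the window \emph{position} does not help — is that every block of $\mathcal{F}$ starts strictly after $T$, so conditionally on $z_{1:T}$ the sets $\{A_j\}_{j\in\mathcal{F}}$ are still i.i.d.\ uniform and independent of $\hat I$; hence $\Pr[\hat I\notin A_j\mid z_{1:T}]=1-\lceil m/2\rceil/m\ge 1/3$ for each $j\in\mathcal{F}$. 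Dropping the non-negative contribution of the partial end-blocks, the learner's expected average loss over the window is at least $(1-2/C_1)\cdot\tfrac13$.

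It remains to bound the benchmark, and this is the step that forces $r/l=O(\log m)$. Here $|\mathcal{F}|\le\hat W/\ell_0\le r/\ell_0\le 2C_1\cdot(r/l)\le 2C_1c_0\log_2 m$, so choosing $c_0$ small enough that $2C_1c_0\le\tfrac14$ gives $|\mathcal{F}|\le\tfrac14\log_2 m$; consequently $\mathbb{E}\bigl[\,|\bigcap_{j\in\mathcal{F}}A_j|\,\bigm|\,z_{1:T}\bigr]\ge m\cdot 2^{-|\mathcal{F}|}\ge m^{3/4}$, and a routine negative-association (or second-moment) concentration shows $\bigcap_{j\in\mathcal{F}}A_j\neq\emptyset$ except with probability $o(1)$. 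On that typical event any $i^\star\in\bigcap_{j\in\mathcal{F}}A_j$ has loss $0$ on every position inside a block of $\mathcal{F}$, so its average loss over the window — hence the benchmark — is at most $2\ell_0/\hat W\le 2/C_1$. Thus the expected benchmark is at most $2/C_1+o(1)$, and subtracting this from the learner's lower bound gives expected excess risk at least $(1-2/C_1)/3-2/C_1-o(1)$, a positive absolute constant for $C_1$ large (e.g.\ $C_1=100$). Averaging over the instance distribution produces the claimed fixed instance.

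The heart of the matter — and the step I expect to be trickiest to pin down — is the tension between these last two estimates: the benchmark must be small, so \emph{some single model} has to be good for \emph{all} blocks a window touches, yet the learner must fail, so each block's good model must be unpredictable from the prefix. Random good sets of size $\tfrac m2$ reconcile this only because a common good model survives the intersection of up to $\approx\log_2 m$ independent blocks while a model picked without seeing a block is good for it with probability just $\tfrac12$; this is precisely why the number of blocks a window can span — equivalently $r/l$, up to constants — must be $O(\log m)$. (For very small $m$ the requirement $r/l\le c_0\log_2 m$ together with $l\le r$ is unsatisfiable, so the lemma is vacuous there, consistent with Theorem~\ref{thm:learn-lower} being meaningful only for large $|\mathcal{L}|$.)
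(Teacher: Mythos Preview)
Your proof is correct and follows essentially the same approach as the paper: partition time into blocks of size $\Theta(l)$, randomize the block losses so that the learner's choice is independent of the $O(\log m)$ fresh blocks its window spans (forcing $\Omega(1)$ expected loss) while some model is simultaneously good on all those blocks with probability $\Omega(1)$. The only cosmetic difference is your randomization---you fix a random size-$\lceil m/2\rceil$ ``good set'' per block and then need a second-moment/negative-association step to show the intersection is nonempty, whereas the paper makes each model's bit on each block an independent $\mathrm{Bernoulli}(1/2)$ and gets the existence of a good model directly from $(1-1/(m-1))^{m-1}\le 1/e$; the paper's choice is slightly cleaner but the content is identical.
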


    \begin{remark}\label{rem:construct}
        Before proving Lemma~\ref{lem:interval}, we remark that to construct an instance of selective learning with $m$ models and $n$ data points, it suffices to specify $m$ binary sequences $a^{(1)}, a^{(2)}, \ldots, a^{(m)} \in \{0, 1\}^n$. This is because we can always construct an instance as follows to ensure $\ell_i(z_j) = a^{(i)}_j$ for each $(i, j) \in [m]\times[n]$:
        \begin{itemize}
            \item The instance space and label space are $\X = \{0, 1\}^m$ and $\Y = \{0, 1\}$. 
            \item The loss function is the binary loss $\ell(\hat y, y) = \I{\hat y\ne y}$.
            \item The model class is $\F = \{f_1, f_2, \ldots, f_m\}$ where $f_i(x) = x_i$ for any $x \in \X$.
            \item The $j$-th data point $(x_j, y_j)$ is defined such that the $i$-th bit of $x_j$ is $a^{(i)}_j$, and $y_j = 0$.
        \end{itemize}
        Then, the loss of $f_i$ on $(x_j, y_j)$ is $\ell(f_i(x_j), y_j) = f_i(x_j)$, i.e., the $i$-th bit of $x_j$, which is exactly $a^{(i)}_j$.
    \end{remark}

    \begin{proof}[Proof of Lemma~\ref{lem:interval}]
        We will consider a distribution over instances, and show that the algorithm incurs a constant excess risk on a random instance from the distribution. Then, the lemma follows from an averaging argument. In light of Remark~\ref{rem:construct}, we construct the instance by randomly generating $m$ sequences independently as follows. We partition the time horizon $1, 2, \ldots, n$ into $4n/l$ blocks of length $l/4$. Each block consists of $l/4$ copies of the same random bit drawn from $\{0, 1\}$ uniformly and independently. Equivalently, each binary sequence is constructed by sampling a random sequence from $\{0, 1\}^{4n/l}$ and duplicating each bit $(l/4)$ times.
        
        Now we analyze the excess risk of the algorithm on a randomly generated instance. Suppose that the learner picks a window length $w \in [l, r]$ and a model $\ell_{\hat i}$. We say that the learner has observed a block if it has seen at least one data point in that block. Since each block is of length $l/4$ and $w \ge l$, the prediction window contains at most $l/4-1 \le w/4$ data points in the observed blocks. By construction, the loss of $\ell_{\hat i}$ on each of the other $\ge 3w/4$ data points is a uniformly random bit, even after conditioning on the observations so far. Thus, the average loss of $\ell_{\hat i}$ over the prediction window is at least $\frac{1}{2}\cdot\frac{3w/4}{w} = \frac{3}{8}$ in expectation.
        
        On the other hand, there are at most $w / (l/4) \le 4r/l$ unobserved blocks in the prediction window. Thus, for each of the $m - 1$ models other than $\ell_{\hat i}$, with probability at least $2^{-4r/l}$, the losses of the model on those unobserved blocks are zero, which implies that its average loss over the prediction window is at most $(l/4 - 1)/w \le \frac{1}{4}$. Since we assumed that $r/l = O(\log m)$, as long as the constant in $O(\cdot)$ is sufficiently small, it holds that $2^{-4r/l} \ge \frac{1}{m-1}$. Since the sequences are independent, with probability at least $1 - (1 - \frac{1}{m-1})^{m-1} \ge 1 - e^{-1} = \Omega(1)$, at least one of the other $m - 1$ models satisfy the condition above. Conditioning on this, the excess risk of the chosen model $\ell_{\hat i}$ is at least $\frac{3}{8} - \frac{1}{4} = \Omega(1)$. This proves of the $\Omega(1)$ lower bound.
    \end{proof}
    
    Using Lemma~\ref{lem:interval}, we prove the first case of Theorem~\ref{thm:learn-lower} (in which the algorithm is non-adaptive) by constructing multiple hard instances for different sub-intervals of $[1, n]$.
    \begin{proof}[Proof of Theorem~\ref{thm:learn-lower}, Case (1)]
        Fix some parameter $c = \Theta(\log m)$ with the same hidden constant as in Lemma~\ref{lem:interval}. Consider the $k = \lceil\log_c n\rceil$ intervals $[1, c], [c, c^2], \ldots, [c^{k-1}, c^k]$. Since the algorithm is non-adaptive and always picks a window length in $[1, n]$, it can be viewed as the mixture of $k$ sub-algorithms, where the $i$-th sub-algorithm has a probability of $p_i$ and always chooses a window length in $[c^{i-1}, c^{i}]$. More formally, the non-adaptive algorithm is equivalent to randomly picking $i\in[k]$ according to $p_1, p_2, \ldots, p_k$ and then running the $i$-th sub-algorithm. 
        
        Note that there exists $i\in[k]$ such that $p_i \ge \frac{1}{k} = \frac{1}{\lceil\log_c n\rceil} = \Omega\left(\min\left\{\frac{\log\log m}{\log n}, 1\right\}\right)$. By Lemma~\ref{lem:interval}, there exists an instance on which the $i$-th sub-algorithm incurs a constant loss. Therefore, the expected loss of the original algorithm on the same instance is at least 
            \[p_i\cdot\Omega(1) = \Omega\left(\min\left\{\frac{\log\log m}{\log n}, 1\right\}\right).\]
    \end{proof}
    
    The second case of Theorem~\ref{thm:learn-lower} is proved by constructing $|\L|$ independent sequences of losses, such that each loss sequence is sufficiently anti-concentrated at all timescales. This can be accomplished using a variant of the lower bound construction in~\cite[Section 2.3]{qiao2019theory}: Assume that $n = 2^k$ for some integer $k$ and consider a full binary tree $\T$ with $2^k$ leaves and $k + 1$ levels numbered $0, 1, \ldots, k$ from top to bottom. We label each node of $\T$ with a value in $[0, 1]$ using the following procedure:
    \begin{itemize}
        \item The value of a node in level $j$ is either $\frac{1}{2} + \frac{j}{4k}$ or $\frac{1}{2} - \frac{j}{4k}$. In particular, the root is labeled $\frac{1}{2}$.
        \item For each node $v$ with parent $u$, conditioning on the value of $u$, the expected value of $v$ is equal to the value of $u$.
    \end{itemize}
    Equivalently, conditioning on that the parent is labeled $\frac{1}{2} + \frac{j-1}{4k}$, the child takes value $\frac{1}{2} + \frac{j}{4k}$ with probability $1 - \frac{1}{2j}$, and $\frac{1}{2} - \frac{j}{4k}$ with probability $\frac{1}{2j}$; the two probabilities are switched if the parent has value $\frac{1}{2} - \frac{j-1}{4k}$. Note that the values on every root-to-leaf path in $\T$ form a martingale.
    
    For each edge between a node $v$ and its parent $u$, we say that the edge is a \emph{flip}, if $u$ and $v$ are assigned values that are on the opposite sides of $1/2$. Then, if node $v$ is in the $j$-th level, the probability of a flip is exactly $\frac{1}{2j} \in [\frac{1}{2k}, \frac{1}{2}]$.
    
    Given a randomly labeled tree $\T$ generated as above, we define a binary sequence of length $n = 2^k$ using $\T$ as follows. We first number the leaves of $\T$ with $1, 2, \ldots, 2^k$ in the natural way, such that for every internal node, every leaf in its left subtree has a smaller number than every leaf in the right subtree does. For each $i \in [2^k]$, we look at the value $p$ of the leaf with number $i$, and the $i$-th bit of the sequence will be an independent sample from the Bernoulli distribution with parameter $p$. The binary sequence is obtained by concatenating these $n$ independent bits. Let $\D_n$ denote the probability distribution over $\{0, 1\}^n$ that is implicitly defined by the above procedure.
    
    Now we are ready to prove the second case of Theorem~\ref{thm:learn-lower}.
    
    \begin{proof}[Proof of Theorem~\ref{thm:learn-lower}, Case (2)]
        Let $a^{(1)}, a^{(2)}, \ldots, a^{(m)}$ be $m$ independent samples from $\D_n$, and let $\T_i$ denote the labeled binary tree that corresponds to $a^{(i)}$. By Remark~\ref{rem:construct}, we can construct a model class $\L = \{\ell_1, \ldots, \ell_m\}$ and data sequence $z \in \Z^n$ such that $\ell_i(z_j)$ is equal to $a^{(i)}_j$, the $j$-th bit of the $i$-th string.
        
        Suppose that a selective learning algorithm, when running on the instance defined as above, stops after $t$ steps and chooses model $\ell_{\hat i}$ for the next $w$ data points. We will prove the following stronger result: conditioning on every possible triple $(t, w, \hat i)$, the conditional expectation of the excess risk incurred by the algorithm is at least $\Omega\left(\min\left\{\frac{\log\log|\L|}{\log n}, 1\right\}\right)$. In the following, we will assume that $t \ge 1$; the case that $t = 0$ can be handled similarly.
        
        \paragraph{Observed nodes and critical nodes.} For each node in each of the full binary trees $\T_1, \T_2, \ldots, \T_m$, we say that the node is \emph{observed} if at least one of the leaves among its descendants has a number at most $t$; otherwise the node is \emph{unobserved}. Intuitively, given the first $t$ entries of $a^{(i)}$, the learner may have a good knowledge about the value of each observed node in $\T_i$, whereas the value of each unobserved node is still sufficiently random.
        
        Furthermore, we say that a node is \emph{critical} if the following two conditions hold: (1) the node itself is unobserved but its parent is observed; (2) among the descendants of the node, there is a leaf numbered between $t + 1$ and $t + w$. The first condition guarantees that the value of each critical node has a decent chance of dropping below $1/2$, even conditioning on the observations in the first $t$ time steps. The second condition guarantees that the value of each critical node affects the average of the sequence within the prediction window.
        
        \paragraph{Decompose the prediction window.} The prediction window corresponds to $w$ consecutive leaves numbered $t + 1$ through $t + w$. Since all these $w$ leaves are unobserved and the root is observed as long as $t \ge 1$, each of the $w$ leaves has a unique ancestor that is critical. We can group these $w$ leaves based on their critical ancestors: let $v_1, v_2, \ldots, v_q$ be the $q$ critical nodes sorted in the ascending order of their levels. Let $d_i$ be the level of $v_i$, and let $n_i$ be the number of leaves numbered between $t+1$ and $t+w$ in the subtree rooted at $v_i$. See Figure~\ref{fig:decomp} for an example of this decomposition.
        
        \begin{figure}[h]
        \centering
        \includegraphics[scale=0.2]{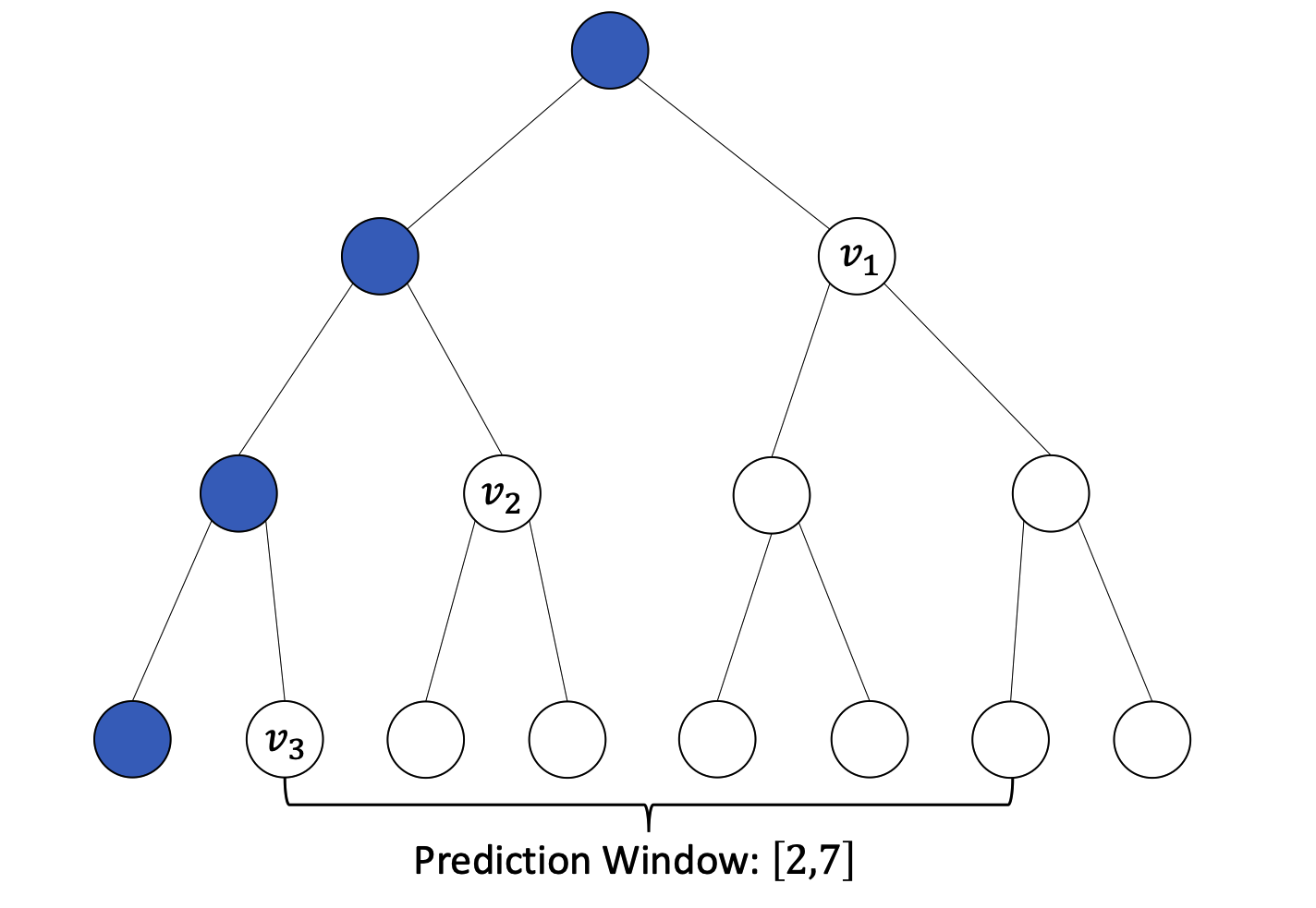}
        \caption{An example of the decomposition with $k = 3$, $t = 1$ and $w = 6$. The shaded nodes are observed. $v_1, v_2, v_3$ are the critical nodes, with $(d_1, d_2, d_3) = (1, 2, 3)$ and $(n_1, n_2, n_3) = (3, 2, 1)$.}
        \label{fig:decomp}
        \end{figure}
        
        We have the following observations:
        \begin{itemize}
            \item No two critical nodes are in the same level, so $1 \le d_1 < d_2 < \cdots < d_q \le k$ and thus $q \le k$.
            \item For each $i \ge 2$, $n_i = 2^{k - d_i}$. In other words, every leaf in the subtrees rooted at $v_2, v_3, \ldots, v_q$ is numbered between $t + 1$ and $t + w$.
            \item $n_1 + n_2 + \cdots + n_q = w$, i.e., the critical nodes define a partition of the $w$ leaves.
        \end{itemize}
        Furthermore, the first two observations imply that $n_3 + n_4 + \cdots + n_q \le n_2/2 + n_2 / 4 + \cdots < n_2$. Together with the third observation, this implies $\max(n_1, n_2) \ge w/3$, i.e., at least one critical node (either $v_1$ or $v_2$) covers a constant fraction of the $w$ leaves.
        
        \paragraph{Lower bound the average loss of $\ell_{\hat i}$.} Now we give a lower bound on the expected average loss of the chosen model $\ell_{\hat i}$ over the prediction window $[t + 1, t + w]$. By our construction of the sequence, conditioning on the values of the critical nodes in $\T_{\hat i}$, the first $t$ entries of $a^{(\hat i)}$ are independent of entries $a^{(\hat i)}_{t+1}$ through $a^{(\hat i)}_{t+w}$. Furthermore, for each $t' \in [t + 1, t + w]$, conditioning on the value of the critical node $v_i$ that is an ancestor of leaf $t'$, the expectation of $\ell_{\hat i}(z_{t'})$ is exactly the value of $v_i$. Finally, recalling that every node in level $d_i$ is assigned with value $\frac{1}{2} \pm \frac{d_i}{4k}$, we conclude that the expected average loss of $\ell_{\hat i}$ over the entire prediction window is at least
        \[
            \sum_{i=1}^{q}\frac{n_i}{w}\cdot\left(\frac{1}{2} - \frac{d_i}{4k}\right).
        \]
        
        \paragraph{Good events for the remaining $m-1$ models.} On the other hand, we will show that with probability $\Omega(1)$, at least one of the other $m - 1$ models achieves a considerably lower average loss. Let $j^*$ be the index $j \in [q]$ that maximizes $n_j$. Recall that we proved $n_{j^*} \ge w/3$. Let $\Delta \ge 1$ be a parameter to be determined later. For each $i \in [m] \setminus \{\hat i\}$, define $\E_i$ as the event that the following conditions hold simultaneously:
        \begin{itemize}
            \item Every critical node $v_j$ is assigned value $\frac{1}{2} - \frac{d_j}{4k}$ in $\T_i$.
            \item For every node $v$ that has level at most $d_{j^*} + \Delta$ and is a descendant of $v_{j^*}$, $v$ is assigned a value less than $\frac{1}{2}$ in $\T_i$.
            \item Furthermore, if $d_{j^*} + \Delta > k$, the $t'$-th bit of sequence $a^{(i)}$ is zero as long as the leaf with number $t'$ is a descendant of $v_{j^*}$.
        \end{itemize}
        
        Let $\E$ be the union of the events $\{\E_i: i \in [m] \setminus \{\hat i\}\}$. We will show that: (1) conditioning on $\E$, the excess risk is at least $\Omega(\min\{\frac{\Delta}{\log n}, 1\})$; (2) for some $\Delta \ge \Omega(\log\log|\L|)$, event $\E$ happens with probability $\Omega(1)$. These two claims together immediately imply the theorem.
        
        \paragraph{Excess risk is high conditioning on $\E$.} We first consider the case that $d_{j^*} + \Delta > k$. Whenever $\E_i$ happens for some $i \ne \hat i$, the first condition of $\E_i$ guarantees that in $\T_i$, the leaves of each subtree rooted at $v_j$ contributes at most $\frac{n_j}{w}\cdot\left(\frac{1}{2} - \frac{d_j}{4k}\right)$ to the average loss of $\ell_i$ in expectation. Furthermore, the third condition of $\E_i$ implies that the subtree rooted at $v_{j^*}$ contributes exactly zero to the average loss. Thus, the expected excess risk conditioning on $\E_i$ is at least 
        \[
            \sum_{j=1}^{q}\frac{n_j}{w}\cdot\left(\frac{1}{2} - \frac{d_j}{4k}\right)
        -   \sum_{j\in[q]\setminus\{j^*\}}\frac{n_j}{w}\cdot\left(\frac{1}{2} - \frac{d_j}{4k}\right)
        =    \frac{n_{j^*}}{w}\cdot\left(\frac{1}{2} - \frac{d_{j^*}}{4k}\right)
        \ge \frac{1}{3} \cdot \frac{1}{4} = \Omega(1).
        \]
        Similarly, when $d_{j^*} + \Delta \le k$, every subtree rooted at $v_j$ for $j \ne j^*$ still contributes $\frac{n_j}{w}\cdot\left(\frac{1}{2} - \frac{d_j}{4k}\right)$ to the expected average loss. On the other hand, the second condition implies that every descendant of $v_{j^*}$ at level $d_{j^*} + \Delta$ is assigned value $\frac{1}{2} - \frac{d_{j^*} + \Delta}{4k}$. Thus, the expected excess risk is lower bounded by
        \[
            \frac{n_{j^*}}{w}\cdot\left(\frac{1}{2} - \frac{d_{j^*}}{4k}\right) - \frac{n_{j^*}}{w}\cdot\left(\frac{1}{2} - \frac{d_{j^*} + \Delta}{4k}\right)
        =   \frac{n_{j^*}}{w} \cdot \frac{\Delta}{4k}
        \ge \frac{1}{3} \cdot \frac{\Delta}{4k}
        =   \Omega\left(\frac{\Delta}{\log n}\right).
        \]
        Combining these two cases proves the $\Omega(\min\{\frac{\Delta}{\log n}, 1\})$ bound that we claimed.
    
        \paragraph{Event $\E$ happens with a good probability.} Now we turn to lower bounding $\pr{\E_i}$ for each $i$. By our construction of the node values, every critical node gets a value below $\frac{1}{2}$ with probability at least $\frac{1}{2k}$, regardless of the value of its parent. Recalling that $q \le k$, the first condition of $\E_i$ happens with probability at least $(2k)^{-k}$. Furthermore, conditioning on that the first condition holds, the second condition holds if there are no flips within the top $\Delta$ levels in the subtree rooted at $v_{j^*}$. Recall that a flip happens with probability at most $\frac{1}{2}$, and note that there are at most $2\cdot(2^\Delta-1)$ such edges. Thus, the second condition holds with probability at least $2^{-2\cdot(2^\Delta-1)}$. Finally, conditioning on that the first two conditions hold simultaneously, the last condition holds if each of the $2^{k - d_{j^*}} \le 2^\Delta$ samples from the Bernoulli distribution with parameter $1/4$ turns out to be zero. This happens with probability at least $(3/4)^{2^\Delta}$. Therefore, we have
        \[
            \pr{\E_i}
        \ge (2k)^{-k} \cdot 2^{-2\cdot(2^\Delta-1)} \cdot (3/4)^{2^\Delta}.
        \]
        If we choose $\Delta$ such that $\pr{\E_i} \ge \frac{1}{m-1}$, the probability of $\E$ would be lower bounded by
        \[
            1 - \pr{\bigcap_{i\in[m]\setminus\{\hat i\}}\overline{\E_i}}
        \ge 1 - \left(1 - \frac{1}{m-1}\right)^{m-1}
        \ge 1 - 1/e = \Omega(1)
        \]
        as desired. Note that 
        \begin{align*}
            \pr{\E_i} \ge \frac{1}{m-1}
        &\impliedby
            (2k)^{k} \cdot 2^{2(2^\Delta-1)} \cdot (4/3)^{2^\Delta} \le m - 1\\
        &\impliedby
            3\cdot 2^\Delta \le \log_2\left[\frac{m - 1}{(2k)^k}\right].
        \end{align*}
        Moreover, since we assumed that $m \ge n^{\omega(\log\log n)}$ while $(2k)^k = n^{O(\log\log n)}$, for sufficiently large $n$ we have $\frac{m-1}{(2k)^k} \ge \sqrt{m}$. Then,
        \[
            \pr{\E_i} \ge \frac{1}{m-1}
        \impliedby
            3\cdot 2^\Delta \le \log_2\sqrt{m}
        \impliedby
            \Delta \le \log_2\left(\frac{1}{6}\log_2m\right).
        \]
        In other words, we can take $\Delta = \Omega(\log\log|\L|)$ for $\pr{\E_i} \ge \frac{1}{m-1}$ to be satisfied. This proves the second claim and thus the theorem.
    \end{proof}

\section{Proof of the Bounded-Recall Lower Bound}\label{sec:missing-lower-bounded-recall}
\begin{proof}[Proof of Theorem~\ref{thm:bounded-recall-lower}]
    Without loss of generality, we assume that both $n$ and $m$ are powers of two, $m \le n$, and let $k = \log_2n$. Fix a bounded-recall algorithm for selective learning, and let $w$ denote the length of the prediction window chosen by the algorithm. Since every bounded-recall algorithm is also non-adaptive, the window length $w$ is independent of the actual data sequence. Let $\Ical_1 = [2^1, 2^2), \Ical_2 = [2^2, 2^3), \ldots, \Ical_{k-1} = [2^{k-1}, 2^k)$ be a partition of $[2, n)$, and let $p_i$ be the probability that $w \in \Ical_i$.
    
    \paragraph{Existence of large $p_i$'s.} We assume without loss of generality that $p_1 + p_2 + \cdots + p_{k-1} \ge 1/2$; otherwise, we have either $\pr{w = 1} \ge 1/4$ or $\pr{w = n} \ge 1/4$, and we will address these corner cases at the end of the proof. Under this assumption, it is clear that there exist $\log_2m$ different indices $k_1, k_2, \ldots, k_{\log_2m} \in [k - 1]$ such that $p_{k_1} + p_{k_2} + \cdots + p_{k_{\log_2m}} \ge \frac{\log_2m}{2k}$, i.e., with a good probability, the window length $w$ falls into the union of $\log_2m$ intervals $\bigcup_{i=1}^{\log_2m}\Ical_{k_i}$.
    
    \paragraph{Block-based construction of strings.} By Remark~\ref{rem:construct}, we can construct the selective learning instance by generating $m = |\L|$ number sequences $a^{(1)}, \ldots, a^{(m)} \in \{0, 1\}^n$. Then, there always exists an instance where $\ell_i(z_j)$ is exactly $a^{(i)}_j$, i.e., the binary string $a^{(i)}$ denotes the losses of $\ell_i$ on the $n$ data points. We will consider the following $k + 1$ distributions over $\{0, 1\}^n$: $\D_0, \D_1, \D_2, \ldots, \D_k$. Each distribution $\D_i$ is defined by the following procedure: generate $n/2^i$ independent random bits and form a length-$n$ string by repeating each bit $2^i$ times. In other words, each sample from $\D_i$ consists of blocks of size $2^i$, and each block consists of $2^i$ copies of the same random bit. In particular, $\D_0$ is the uniform distribution over $\{0, 1\}^n$, while $\D_k$ is uniform over $\{00\cdots00, 11\cdots11\}$.
    
    \paragraph{The hard instance.} The hard instance against the fixed bounded-recall learner will be constructed by taking a few independent strings from some of the $\D_i$'s and then randomly permuting the sequences. Specifically, the $m$ strings consists of the following:
    \begin{itemize}
        \item $m/2$ independent strings drawn from $\D_{k_1 - 1}$.
        \item $m/4$ independent strings drawn from $\D_{k_2 - 1}$.
        \item $\cdots$
        \item One string drawn from $\D_{k_{\log_2m} - 1}$.
        \item One string that consists of $n$ zeros.
    \end{itemize}
    Indeed, there are $m/2 + m/4 + \cdots + 1 + 1 = m$ strings in total. Finally, these $m$ strings are randomly permuted such that the learner cannot tell in advance the distribution from which each $a^{(i)}$ is drawn. In the following, we will show that whenever the chosen window length $w$ falls into some $\Ical_{k_i}$ ($i \in [\log_2 m]$), the expected excess risk is $\Omega(1)$. This immediately proves that the overall expected excess risk is at least $(p_{k_1} + p_{k_2} + \cdots + p_{k_{\log_2 m}}) \cdot \Omega(1) = \Omega\left(\frac{\log|\L|}{\log n}\right)$.

    \paragraph{Excess risk is high whenever a bad string is chosen.} Suppose that the window length $w$ falls into $\Ical_{k_i} = [2^{k_i}, 2^{k_i+1})$. In this case, we say that each of the $m$ strings is ``bad'', if it is drawn from either of $\D_{k_1 - 1}, \D_{k_2 - 1}, \ldots, \D_{k_i - 1}$; otherwise the string is ``good''. Note that there are exactly $m/2^i$ good strings.
    
    We claim that, conditioning on that the algorithm chooses a bad string, it incurs an expected excess risk of $\Omega(1)$. To see this, note that if we choose a bad string from $\D_{k'}$ for some $k' \le k_i - 1$, the string consists of blocks of size $2^{k'} \le 2^{k_i - 1} \le w / 2$. Thus, at least half of the bits in the prediction window have not been observed, and are thus uniformly distributed over $\{0, 1\}$ even conditioning on the previous observations. It follows that the expected loss of the chosen model over the prediction window is at least $\frac{1}{2} \cdot \frac{1}{2} = \frac{1}{4}$. On the other hand, since one of the $m$ strings is the all-zero string, the best achievable loss over the window is $0$. This shows that the expected excess risk conditioning on choosing a bad string is at least $1/4 = \Omega(1)$.
    
    In the rest of the proof, we will show that the probability of choosing a bad string is also $\Omega(1)$. The intuition behind this is that there are exactly $m/2^i$ good strings, yet there are also $m/2^i$ bad strings that are drawn from $\D_{k_i - 1}$. Since the bounded-recall learner only sees the most recent $w < 2^{k_i + 1}$ entries of the sequences, there is a decent chance that each such bad string ``looks like'' a good string, and thus it is impossible to distinguish these bad strings from the good ones perfectly.
    
    \paragraph{Classify the good strings.} Let $M = m/2^i$ denote both the number of good strings, which is also the number of strings drawn from $\D_{k_i - 1}$. Suppose that the bounded-recall algorithm chooses the model based on the most recent $w$ entries $t - w + 1, t - w + 2, \ldots, t$ of the sequences. We will show that, only given these $w < 2^{k_i+1}$ entries, it is impossible to identify a good string with probability $1 - o(1)$. We start by classifying the good strings into at most four types:
    \begin{itemize}
        \item Type 0: The single all-zero string, in which the observed entries are always $w$ zeros.
        \item Type 1: The strings drawn from some $\D_{k'}$ such that the interval $[t-w+1,t]$ intersects only one block of size $2^{k'}$, i.e., $\lfloor(t-w)/2^{k'}\rfloor = \lfloor(t-1)/2^{k'}\rfloor$. In this case, the $w$ observed entries of the string are either all zeros or all ones, with equal probability.
        \item Type 2: The strings drawn from some $\D_{k'}$ such that the interval $[t-w+1,t]$ intersects exactly two blocks of size $2^{k'}$. This happens if and only if $\lfloor(t-w)/2^{k'}\rfloor$ and $\lfloor(t-1)/2^{k'}\rfloor$ differ by one. In this case, the $w$ observed bits contain two parts (of possibly different lengths), and each part consists of the same random bit.
        \item Type 3: The strings drawn from some $\D_{k'}$ such that the interval $[t-w+1,t]$ intersects exactly three blocks of size $2^{k'}$. This can only happen if $k' = k_i$.
    \end{itemize}
    For instance, suppose that $k_i = 2$, $w = 6 \in [4, 8)$ and $t = 9$, so that the most recent $6$ entries are those with indices $4$ through $9$. Then, each good string $a \sim \D_4$ consists of blocks of length $16$, and the interval $[4, 9]$ only intersects the first block. Thus, each such string is of Type 1. Each good string $a \sim \D_3$ is of Type 2, and the observed window $(a_4, \ldots, a_9)$ is identically distributed as $(b_1, b_1, b_1, b_1, b_1, b_2)$ where $b_1$ and $b_2$ are independent random bits. Furthermore, every good string $a \sim \D_2$ is of Type 3, and the observed bits follow the distribution of $(b_1, b_2, b_2, b_2, b_2, b_3)$ for random bits $b_1, b_2, b_3$. 
    
    The crucial observation is that, for each of the $M$ bad strings sampled from $\D_{k_i - 1}$, since $w < 2^{k_i + 1} = 4 \cdot 2^{k_i - 1}$, the observed window of length $w$ intersects at most $4$ blocks of length $2^{k_i - 1}$. Thus, for each $j \in \{0, 1, 2, 3\}$, there exists some event $\E_j$ that happens with probability $2^{-4} = \Omega(1)$ such that conditioning on $\E_j$, the observed $w$ bits in the bad string is identically distributed as those in a good string of Type $j$. More concretely, let $\E_0$ be the event that every size-$2^{k_i-1}$ block that intersects $[t-w+1, t]$ consists of zeros. Then, $\pr{\E_0} \ge 2^{-4}$ and conditioning on $\E_0$, this bad string is indistinguishable from the all-zero good string of Type 0. Similarly, we can define $\E_1$ as the event that all the relevant size-$2^{k_i-1}$ blocks receive the same bit, and the conditional distribution of the observed window is also identical to that of a Type 1 good string.
    
    \paragraph{Lower bound the probability of choosing a bad string.} Therefore, we can partition the $M$ bad strings from $\D_{k_i - 1}$ into four groups numbered $0, 1, 2, 3$, each consisting of $M/4$ strings. For each $j \in \{0, 1, 2, 3\}$, let $m_j$ be the number of good strings of Type $j$, and let random variable $m'_j$ be the number of bad strings from $\D_{k_i - 1}$ for which the event $\E_j$ happens. Our previous observation implies that for every $j$, $\Ex{m'_j} \ge (M/4) \cdot 2^{-4} = M/64$. By Markov's inequality, $\pr{m'_j \ge M/128} \ge 1/32$. Since the four groups of strings are independent, it holds with probability $1/32^4 = \Omega(1)$ that $m'_0, \ldots, m'_3$ are all at least $M/128$. Let $p_j$ be the probability that the algorithm chooses a particular Type $j$ string. This is well-defined because the strings are randomly permuted and all Type $j$ strings are indistinguishable. It also holds that the algorithm would choose each of the $m'_j$ bad strings with probability exactly $p_j$. Therefore, we have
    \[
        \sum_{j=0}^{3}p_j(m_j + m'_j) \le 1.
    \]
    Then, the probability of choosing a good string is upper bounded by
    \[
        \sum_{j=0}^{3}p_jm_j
    =   \sum_{j=0}^{3}p_j(m_j + m'_j)\cdot\frac{m_j}{m_j + m'_j}
    \le \frac{1}{1+1/128}\sum_{j=0}^{3}p_j(m_j + m'_j)
    \le \frac{128}{129} = 1 - \Omega(1),
    \]
    where the second step follows from $m_j \le M$ and $m'_j \ge M / 128$.
    This proves that a bad string is chosen with probability at least $\Omega(1)$, which in turn implies the desired lower bound.
    
    \paragraph{Handle the $w = 1$ and $w = n$ cases.} Finally, we take care of the corner cases where either $w = 1$ or $w = n$ holds with probability $\Omega(1)$. In the former case, consider an instance formed by $m$ independent strings from $\D_0$. Then, regardless of the previous observations, the loss of each model on the next data point is a random bit, so the expected loss of the chosen model is $1/2$. On the other hand, as long as $m \ge 2$, the expected minimum loss is $2^{-m} \le 1/4$ and thus the excess risk is $\Omega(1)$ in expectation.
    
    Similarly, for the latter case that $w = n$, we could consider $m$ strings drawn from $\D_k$. Then, the average loss of the chosen model over the whole horizon is either $0$ or $1$ with equal probability, while the minimum possible loss is $2^{-m}$ in expectation. So the expected excess risk is also $\Omega(1)$.
\end{proof}

\section{Generalized Mean Prediction}\label{sec:mean}
    In this section, we analyze Algorithm~\ref{alg:mean-prediction} and prove Theorem~\ref{thm:mean-prediction}. The analysis builds on the following lemma which bounds the $f$-loss in terms of the function value of $f$.
    
    \begin{lemma}\label{lem:loss-bound}
        For any $C_1$-self-concordant convex function $f: S \to \R$, $u, v \in S$ and $\mu = (u + v) / 2$, it holds that $D_f(u, v) + D_f(v, u) \le (2C_1 + 4)[f(u) + f(v) - 2f(\mu)]$.
    \end{lemma}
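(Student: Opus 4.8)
The plan is to reduce the $d$-dimensional statement to one dimension by restricting $f$ to the segment $[u,v]$, and then to prove the resulting one-dimensional inequality by two well-chosen integrations by parts. Set $g(t) := f(v + t(u-v))$ for $t \in [0,1]$; then $g$ is three-times differentiable, and $h := g''$ satisfies $h \ge 0$ by convexity and $|h'(t)| = |g'''(t)| \le C_1 g''(t) = C_1 h(t)$ on $[0,1]$ by Definition~\ref{def:self-concordance} applied with $x = v$, $y = u$. The left-hand side unfolds as $D_f(u,v)+D_f(v,u) = (\nabla f(u)-\nabla f(v))^\top(u-v) = g'(1)-g'(0) = \int_0^1 h(t)\,dt$.

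For the right-hand side, note $\mu = v + \tfrac12(u-v)$, so $f(u)+f(v)-2f(\mu) = g(0)+g(1)-2g(1/2)$; writing $g(1)-g(1/2)$ and $g(1/2)-g(0)$ as integrals of $g'$, expanding $g'$ about the midpoint as $g'(t) = g'(1/2) + \int_{1/2}^t h(s)\,ds$, and swapping the order of integration, one obtains $f(u)+f(v)-2f(\mu) = \int_0^{1/2} s\,h(s)\,ds + \int_{1/2}^1 (1-s)h(s)\,ds = \int_0^1 \min(s,1-s)\,h(s)\,ds$. Hence the lemma reduces to showing $\int_0^1 h \le (2C_1+4)\int_0^1 \min(t,1-t)\,h$ for every nonnegative $h$ on $[0,1]$ with $|h'|\le C_1 h$. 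To handle the kink of $\min(t,1-t)$ at the midpoint, I split the integrals at $1/2$; since the substitution $t \mapsto 1-t$ maps $[1/2,1]$ onto $[0,1/2]$ and preserves both the hypothesis on $h$ and the weight, it suffices to prove the one-sided claim (applied with $b = 1/2$, for which the constant below equals exactly $2C_1+4$) and then add the two halves:
\[
    \text{if } h\colon [0,b]\to[0,\infty) \text{ and } |h'| \le C_1 h \text{ on } [0,b], \text{ then } \int_0^b h(t)\,dt \le \tfrac{2(1 + C_1 b)}{b}\int_0^b t\,h(t)\,dt.
\]

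To prove the one-sided claim, integrate $\frac{d}{dt}\bigl(t\,h(t)\bigr) = h(t) + t\,h'(t)$ over $[0,b]$ to get $\int_0^b h = b\,h(b) - \int_0^b t\,h'(t)\,dt \le b\,h(b) + C_1\int_0^b t\,h(t)\,dt$, using $|h'|\le C_1 h$. Separately, integrate $\frac{d}{dt}\bigl(t^2 h(t)\bigr) = 2t\,h(t) + t^2 h'(t)$ over $[0,b]$ to get $b^2 h(b) = 2\int_0^b t\,h + \int_0^b t^2 h'(t)\,dt \le 2\int_0^b t\,h + C_1\int_0^b t^2 h \le (2 + C_1 b)\int_0^b t\,h$, where the last step uses $t^2 \le bt$ on $[0,b]$ together with $h \ge 0$; thus $b\,h(b) \le \frac{2+C_1 b}{b}\int_0^b t\,h$. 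Substituting this into the first bound gives $\int_0^b h \le \frac{2+C_1 b}{b}\int_0^b t\,h + C_1\int_0^b t\,h = \frac{2(1+C_1 b)}{b}\int_0^b t\,h$, which is the claim.

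The work here is conceptual rather than computational: one must notice that $f(u)+f(v)-2f(\mu)$ is exactly the $\min(t,1-t)$-weighted integral of the very same function $h = g''$ whose unweighted integral is the symmetrized Bregman term, and that the boundary term $h(b)$ left over by the first integration by parts can be reabsorbed via a second integration by parts against the heavier weight $t^2$. The only subtlety is keeping track of constants so that they collapse to exactly $2C_1+4$; in particular, using $t^2 \le bt$ (rather than the cruder $t^2 \le t$) on $[0,b]$ is what makes the bound scale correctly in $b$, and the split-and-symmetrize step is what lets us apply the clean one-sided estimate to both halves.
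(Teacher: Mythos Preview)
Your proof is correct and takes a genuinely different route from the paper's. You reduce both sides to weighted integrals of the same one-dimensional second derivative $h = g''$: the symmetrized Bregman term becomes $\int_0^1 h$ and the second-difference term becomes $\int_0^1 \min(t,1-t)\,h$, after which two integrations by parts (plus the split-and-symmetrize step to handle the kink at $t=1/2$) yield the weighted integral inequality with constant exactly $\tfrac{2(1+C_1/2)}{1/2} = 2C_1+4$. The paper instead fixes the midpoint $\mu$, writes $u=\mu+\Delta$, $v=\mu-\Delta$, and shows that the difference $(2C_1+4)[f(u)+f(v)-2f(\mu)] - [D_f(u,v)+D_f(v,u)]$, viewed as a function of $\Delta$, has nonnegative radial derivative, hence is $\ge 0$ by the mean value theorem; that radial-derivative condition is then verified by solving the differential inequality $g''(t) \ge e^{-C_1 t}g''(0)$ on the segment and invoking $\tfrac{1-e^{-x}}{x} \ge \tfrac{1}{1+x}$. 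Your argument is more transparent about \emph{why} $f(u)+f(v)-2f(\mu)$ is the natural comparison quantity (it is literally the same $h$, just reweighted by $\min(t,1-t)$), while the paper's variational argument sidesteps the integral representation of the second difference and the symmetrization at $1/2$.
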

    
    \begin{proof}
        Fix $\mu = (u + v) / 2$ and rewrite $u = \mu + \Delta$ and $v = \mu - \Delta$. Define
        \begin{align*}
            g(\Delta) &\coloneqq
            (2C_1 + 4)[f(u) + f(v) - 2f(\mu)] - [D_f(u, v) + D_f(v, u)]\\
        &=   (2C_1 + 4)[f(\mu + \Delta) + f(\mu - \Delta) - 2f(\mu)] - 2[\nabla f(\mu + \Delta) - \nabla f(\mu - \Delta)]^{\top}\Delta.
        \end{align*}
        We will prove that $\nabla g(\Delta)^{\top}\Delta \ge 0$ for any $\Delta$ such that $\mu + \Delta, \mu - \Delta \in S$. Note that this inequality, together with the mean value theorem, implies that for any valid $\Delta$,
        \[
            g(\Delta)
        =   g(0) + \nabla g(\alpha\Delta)^{\top}\Delta
        =   g(0) + \frac{1}{\alpha}\nabla g(\alpha\Delta)^{\top}(\alpha\Delta)
        \ge g(0) = 0
        \]
        holds for some $\alpha \in (0, 1)$, which proves the second part of the lemma.
        
        To show that $\nabla g(\Delta)^{\top}\Delta \ge 0$, we expand the left-hand side into
        \begin{align*}
            &~(2C_1 + 2)[\nabla f(\mu + \Delta) - \nabla f(\mu - \Delta)]^{\top}\Delta - 2\Delta^{\top}[\nabla^2 f(\mu + \Delta) + \nabla^2 f(\mu - \Delta)]\Delta\\
        =   &~(C_1 + 1)[\nabla f(u) - \nabla f(v)]^{\top}(u-v) - \frac{1}{2}(u-v)^{\top}[\nabla^2 f(u) + \nabla^2 f(v)](u-v).
        \end{align*}
        Define $h:[0, 1]\to \R$ as $h(t) = f(v + t(u-v))$. Then, $\nabla g(\Delta)^{\top}\Delta$ can be further written as
        \[
            (C_1 + 1)[h'(1) - h'(0)] - \frac{1}{2}[h''(0) + h''(1)].
        \]
        Since $f$ is $C_1$-self-concordant, it holds by definition that $|h'''(t)| \le C_1 h''(t)$. Moreover, since $f$ is convex, $h''(t)$ is non-negative. By solving the differential inequality, we have $h''(t) \ge e^{-C_1t}h''(0)$. It then follows that
        \[
            h'(1) - h'(0)
        =   \int_0^1h''(t)~\mathrm{d} t
        \ge h''(0)\int_0^1e^{-C_1t}~\mathrm{d} t
        =   \frac{1-e^{-C_1}}{C_1}h''(0)
        \ge \frac{h''(0)}{C_1 + 1}.
        \]
        The last step applies the inequality $\frac{1-e^{-x}}{x} > \frac{1}{1+x}$ for $x > 0$.
        Similarly, we have $h'(1) - h'(0) \ge \frac{h''(1)}{C_1 + 1}$. Therefore, it holds that
        \begin{align*}
            \nabla g(\Delta)^{\top}\Delta
        &=   (C_1 + 1)[h'(1) - h'(0)] - \frac{1}{2}[h''(0) + h''(1)]\\
        &\ge (C_1 + 1)\cdot\frac{1}{2}\left[\frac{h''(0)}{C_1 + 1} + \frac{h''(1)}{C_1 + 1}\right] - \frac{1}{2}[h''(0) + h''(1)]
        =   0,
        \end{align*}
        which completes the proof.
    \end{proof}

    The proof of Theorem~\ref{thm:mean-prediction} is based on an induction similar to the analyses of selective prediction in~\cite{drucker2013high,qiao2019theory}.
    \begin{proof}[Proof of Theorem~\ref{thm:mean-prediction}]
        We prove by induction that on any sequence with length $2^k$ and average $\mu \in S$, the expected $f$-loss of Algorithm~\ref{alg:mean-prediction} is at most $(4C_1 + 8)\cdot\frac{f_{\max} - f(\mu)}{k}$, where $f_{\max} = \sup_{u \in S}f(u)$. As a direct corollary, since $f$ is $C_0$-bounded, the expected $f$-loss of Algorithm~\ref{alg:mean-prediction} on any sequence of length $n$ is at most
        $(4C_1 + 8)\cdot\frac{f_{\max} - f(\mu)}{k}
        \le \frac{C_0(4C_1 + 8)}{\lfloor\log_2 n\rfloor} = O\left(\frac{C_0(C_1 + 1)}{\log n}\right)$.
        
        Indeed, in the base case $k = 1$, Algorithm~\ref{alg:mean-prediction} reduces to predicting that $x_2$ equals $x_1$. By Lemma~\ref{lem:loss-bound}, the $f$-loss is given by
        \[
            D_f(x_1, x_2) + D_f(x_2, x_1)
        \le (2C_1 + 4)[f(x_1) + f(x_2) - 2f(\mu)]
        \le (4C_1 + 8)[f_{\max} - f(\mu)].
        \]
        For the inductive step, let $u$ and $v$ denote the averages of the two halves of the sequence. With probability $1/k$, the algorithm predicts that the second half of the sequence has an average of $u$, and thus, by Lemma~\ref{lem:loss-bound}, the conditional $f$-loss is
            $D_f(u, v) + D_f(v, u) \le (2C_1 + 4) [f(u) + f(v) - 2f(\mu)]$.
        With the remaining probability of $1-1/k$, the algorithm can be equivalently viewed as randomly picking one of the two halves of the sequence, and running the same algorithm for sequences of length $2^{k-1}$. By the inductive hypothesis, the conditional expected value of the $f$-loss is at most
        \[
            \frac{1}{2}\left[(4C_1 + 8)\cdot\frac{f_{\max} - f(u)}{k-1}
        +   (4C_1 + 8)\cdot\frac{f_{\max} - f(v)}{k-1}\right]
        =   (2C_1 + 4)\cdot\frac{2f_{\max} - f(u) - f(v)}{k-1}.
        \]
        Therefore, the expected $f$-loss of the algorithm on the original sequence of length $2^k$ is at most
        \[
            \frac{1}{k}(2C_1 + 4)[f(u) + f(v) - 2f(\mu)]
        +   \frac{k-1}{k}(2C_1 + 4)\cdot\frac{2f_{\max} - f(u) - f(v)}{k-1}
        =   (4C_1 + 8)\cdot\frac{f_{\max} - f(\mu)}{k},
        \]
        which completes the induction.
    \end{proof}

\section{Other Omitted Proofs}
\subsection{Missing Proof from Section~\ref{sec:bounded-recall}}\label{sec:missing-bounded-recall}
    In the following, we prove that the log-sum-exp function defined as $f(x) = \ln\left(\sum_{i=1}^{d}e^{-\alpha x_i}\right)$ for $\alpha > 0$ is $(4\alpha)$-self-concordant over $S = [0, 1]^d$, thus completing the proof of Theorem~\ref{thm:bounded-recall-upper}. The proof is similar to the one in Appendix A.6 of~\cite{tran2015composite}; we provide the full proof below for completeness. The only slight difference is that the previous proof works with the 2-norm rather than the infinity norm.
    
    \begin{proof}
        Fix $x, y \in [0, 1]^d$ and define $g(t) \coloneqq f(x + t(y-x))$. Fix $t \in [0, 1]$ and let $\Delta = y - x$, $a_i = e^{-\alpha(x_i + t\Delta_i)}$. It then follows from a direct calculation that
        \[
            g'(t) = -\alpha\cdot\frac{\sum_{i=1}^{d}a_i\Delta_i}{\sum_{i=1}^{d}a_i},
        \]
        \[
            g''(t) = \alpha^2\cdot\frac{\sum_{1 \le i < j \le d}a_ia_j(\Delta_i - \Delta_j)^2}{(\sum_{i=1}^{d}a_i)^2},
        \]
        and
        \[
            g'''(t) = -\alpha^3\cdot\frac{\sum_{1 \le i < j \le d}a_ia_j(\Delta_i - \Delta_j)^2\left[\sum_{k=1}^{d}a_k(\Delta_i + \Delta_j - 2\Delta_k)\right]}{(\sum_{i=1}^{d}a_i)^3}.
        \]
        Note that since $x, y \in [0, 1]^d$ and $\Delta = y - x$, $\Delta_i \in [-1, 1]$ for each $i \in [d]$. It then follows that $|\Delta_i + \Delta_j - 2\Delta_k| \le 4$ for any $i, j, k \in [d]$. Thus, we can bound $|g'''(t)|$ as follows:
        \begin{align*}
            |g'''(t)|
        &\le \alpha^3\cdot\frac{\sum_{1 \le i < j \le d}a_ia_j(\Delta_i - \Delta_j)^2\left[\sum_{k=1}^{d}a_k|\Delta_i + \Delta_j - 2\Delta_k|\right]}{(\sum_{i=1}^{d}a_i)^3}\\
        &\le \alpha^3\cdot\frac{\sum_{1 \le i < j \le d}a_ia_j(\Delta_i - \Delta_j)^2\left(\sum_{k=1}^{d}4a_k\right)}{(\sum_{i=1}^{d}a_i)^3}
        =   4\alpha g''(t).
        \end{align*}
        Therefore, the log-sum-exp function is $4\alpha$-self-concordant over $[0, 1]^d$.
    \end{proof}

\subsection{Missing Proofs from Section~\ref{sec:discuss}}\label{sec:missing-discuss}
We start by restating and proving Proposition~\ref{prop:vc-not-upper}, which states that there exist model classes of a small VC dimension that cannot be learned in the selective learning setting. While the following proof constructs an infinite instance space $\X$ and an infinite model class $\F$, it is easy to discretize the interval $[0, 1]$ and work with finite sets $\X$ and $\F$ of size $O(2^n)$.

\vspace{6pt}

\noindent\textbf{Proposition~\ref{prop:vc-not-upper}~}{\it
    There exists a model class $\F$ of VC dimension $1$ such that for any sequence length $n$, no learning algorithm could achieve an excess risk smaller than $\frac{1}{2}$ in expectation for class $\F$.
}

\begin{proof}
    Consider the instance space $\X = [0, 1]$ and the family of all threshold functions over $\X$: $\F = \{f_{\theta}: \theta \in [0, 1]\}$, where $f_{\theta}:[0,1]\to\{0, 1\}$ is defined as $f_{\theta}(x) = \I{x \ge \theta}$. It is easy to verify that $\F$ has a VC dimension of $1$.

    Now we prove that no learner can achieve a sub-constant excess risk on $\F$. Fix an integer $n$ and consider the following procedure for generating a random instance of selective learning over $\F$. Let $I_1 = [0, 1]$. For each $i = 1, 2, \ldots, n$, the $i$-th data point $x_i$ is chosen as the middle point of the interval $I_i$, and the label $y_i$ is sampled uniformly at random from $\{0, 1\}$. Then, the next interval $I_{i+1}$ is defined to make the label $y_i$ consistent, i.e., for $I_i = [l_i, r_i]$, we set $I_{i+1} = [l_i, (l_i + r_i) / 2]$ if $y_i = 0$ and $I_{i+1} = [(l_i + r_i) / 2, r_i]$ if $y_i = 1$.
    
    By construction, there exists a model in $\F$ that is consistent with all the labeled data points $\{(x_i, y_i)\}_{i=1}^{n}$; indeed, one can verify that $f_{\theta^*} \in \F$ for $\theta^* = (l_{n+1} + r_{n+1}) / 2$ would be such a model. On the other hand, for any learning algorithm (that can be potentially randomized), whenever it makes a prediction at some time step $t$, the remaining labels $y_{t+1}, y_{t+2}, \ldots, y_n$ are, by our construction, uniformly random bits independent of all the observations up to time $t$. This implies that no matter how the learner selects the prediction window and the model, the expected loss of the learner is $\frac{1}{2}$. By an averaging argument, there exists an instance of selective learning over $\F$ on which the learner incurs an expected excess risk of at least $\frac{1}{2}$.
\end{proof}

In the following we restate and prove Proposition~\ref{prop:realizable}.

\vspace{6pt}

\noindent\textbf{Proposition~\ref{prop:realizable}~}{\it
    For the selective learning problem, under the promise that there exists a model $\ell^*\in\L$ such that $\ell^{*}(z_i) = 0$ for every $i \in [n]$, there is an algorithm with expected excess risk of $O\left(\frac{\log|\L|}{n}\right)$.
}

\begin{proof}
    We consider the following simple learning procedure:
    \begin{itemize}
        \item Sample $t \in \{0, 1, \ldots, n - 1\}$ uniformly at random and observe $z_1, z_2, \ldots, z_t$.
        \item Among all models in $\L$ with a zero loss on each of the observed data, choose a model uniformly at random.
        \item Predict that the chosen model works well on the next data point $z_{t+1}$, i.e., the prediction window is of length $1$.
    \end{itemize}
    
    For each $i \in \{0, 1, \ldots, n\}$, let $m_i$ denote the number of models in $\L$ that are consistent with (i.e., have a zero loss on) the first $i$ data points. By definition, $m_0 = |\L|$ and $m_{n} \ge 1$. Note that conditioned on that the algorithm picks stopping time $t$, the probability of incurring a non-zero loss on $z_{t+1}$ is exactly $1 - m_{t+1}/m_t$. Since the loss is at most $1$, the expected excess risk is upper bounded as follows:
    \[
        1 - \frac{1}{n}\sum_{t=0}^{n-1}\frac{m_{t+1}}{m_t}
    \le 1 - \left(\prod_{t=0}^{n-1}\frac{m_{t+1}}{m_t}\right)^{1/n}
    \le 1 - e^{-\ln|\L|/n}
    \le \frac{\ln|\L|}{n}.
    \]
    The first step applies the AM-GM inequality. The second step follows from $m_0 = |\L|$ and $m_n \ge 1$. The last step applies $e^{-x} \ge 1 - x$. This proves the $O\left(\frac{\log|\L|}{n}\right)$ upper bound.
\end{proof}

\end{document}